
\documentclass{article}

\usepackage{microtype}
\usepackage{graphicx}
\usepackage{graphics}
\usepackage{mathtools}
\usepackage{booktabs} 
\usepackage{amsthm}
\usepackage{amssymb}
\usepackage{bm}
\usepackage{amsfonts}
\usepackage{amsmath}
\usepackage{bbm}
\usepackage{subcaption}
\usepackage{adjustbox}










\def\eqref#1{equation~\ref{#1}}









\def\1{\bm{1}}

\def\eps{{\epsilon}}









\def\mA{{\bm{A}}}

\def\mS{{\bm{S}}}

\DeclareMathAlphabet{\mathsfit}{\encodingdefault}{\sfdefault}{m}{sl}
\SetMathAlphabet{\mathsfit}{bold}{\encodingdefault}{\sfdefault}{bx}{n}

\def\gA{{\mathcal{A}}}
\def\gB{{\mathcal{B}}}

\def\gN{{\mathcal{N}}}

\def\gU{{\mathcal{U}}}



\def\sR{{\mathbb{R}}}

\def\sX{{\mathbb{X}}}








\newcommand{\R}{\mathbb{R}}



\usepackage{hyperref}


\theoremstyle{definition}
\newtheorem{definition}{Definition}[section]
\newtheorem{proposition}{Proposition}[section]
\newtheorem{lemma}{Lemma}[section]
\newtheorem{corollary}{Corollary}[lemma]


\usepackage[accepted]{icml2020}

\icmltitlerunning{Deep Graph Mapper}

\begin{document}

\twocolumn[
\icmltitle{Deep Graph Mapper: Seeing Graphs through the Neural Lens}


\icmlsetsymbol{equal}{*}

\begin{icmlauthorlist}
\icmlauthor{Cristian Bodnar}{equal,cam}
\icmlauthor{C\u{a}t\u{a}lina Cangea}{equal,cam}
\icmlauthor{Pietro Li\`{o}}{cam}
\end{icmlauthorlist}

\icmlaffiliation{cam}{Department of Computer Science and Technology, University of Cambridge, Cambridge, United Kingdom}

\icmlcorrespondingauthor{Cristian Bodnar}{cb2015@cam.ac.uk}
\icmlcorrespondingauthor{C\u{a}t\u{a}lina Cangea}{Catalina.Cangea@cst.cam.ac.uk}



\icmlkeywords{Machine Learning, Deep Learning, Topology, Graph Neural Networks, Graph Pooling, Graph Classification, Visualisation, Clustering}

\vskip 0.3in
]



\printAffiliationsAndNotice{\icmlEqualContribution} 

\begin{abstract}
Recent advancements in graph representation learning have led to the emergence of condensed encodings that capture the main properties of a graph. However, even though these abstract representations are powerful for downstream tasks, they are not equally suitable for visualisation purposes. In this work, we merge Mapper, an algorithm from the field of Topological Data Analysis (TDA), with the expressive power of Graph Neural Networks (GNNs) to produce hierarchical, topologically-grounded visualisations of graphs. These visualisations do not only help discern the structure of complex graphs but also provide a means of understanding the models applied to them for solving various tasks. We further demonstrate the suitability of Mapper as a topological framework for graph pooling by mathematically proving an equivalence with Min-Cut and Diff Pool. Building upon this framework, we introduce a novel pooling algorithm based on PageRank, which obtains competitive results with state of the art methods on graph classification benchmarks.
\end{abstract} 

\section{Introduction}

Tasks involving graph-structured data have received much attention lately, due to the abundance of relational information in the real world. Considerable progress has been made in the field of graph representation learning through generating graph encodings with the help of deep learning techniques. The abstract representations obtained by these models are typically intended for further processing within downstream tasks. However, few of these advancements have been directed towards visualising and aiding the human understanding of the complex networks ingested by machine learning models. We believe that data and model visualisation are important steps of the statistical modelling process and deserve an increased level of attention. 

\begin{figure}
    \centering
    \includegraphics[width=1.0\columnwidth]{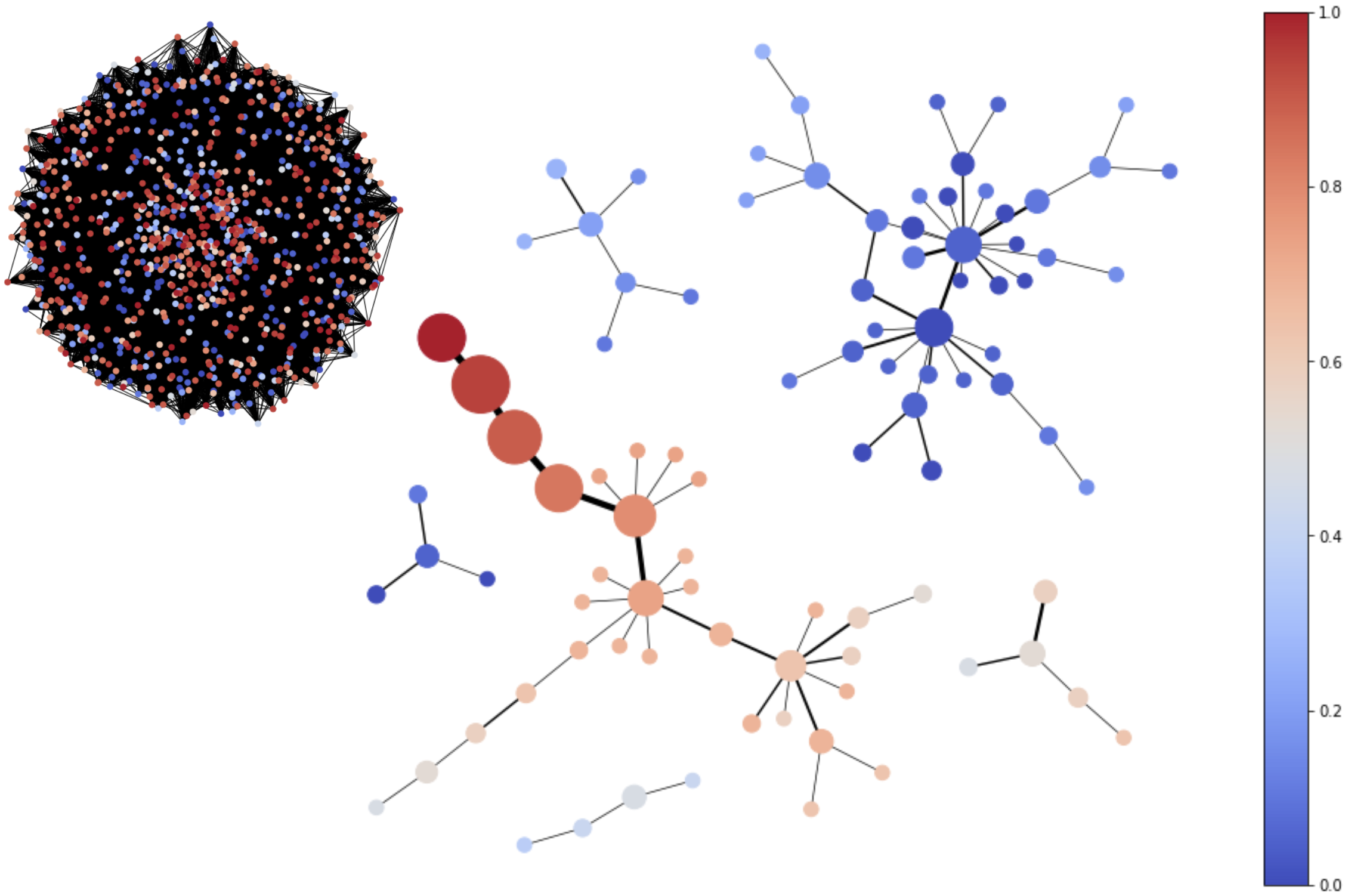}
    \caption{A Deep Graph Mapper (DGM) visualisation of a dense graph containing spammers and non-spammers (top-left). DGM removes the visual clutter in the original NetworkX plot (using a Graphviz `spring' layout), by providing an informative summary of the graph. Each node in the DGM graph represents a cluster of nodes in the original graph. The size of the DGM nodes is proportional to the number of nodes in the corresponding cluster. Each edge signifies that two clusters have overlapping nodes proportional to the thickness of the edge. The clusters are mainly determined by a neural `lens' function: a GCN performing binary node classification. The  colorbar indicates the GCN predicted probability that a node is a spammer. The DGM visualisation illustrates important features of the graph: \emph{spammers} (red) are highly inter-connected and consequently grouped in a just few large clusters, whereas \emph{non-spammers} (blue) are less connected to the rest of the graph and thus form many small clusters.}
    \label{fig:intro}
\end{figure}

Here, we tackle this problem by merging Mapper~\citep{Singh2007}, an algorithm from the field of Topological Data Analysis (TDA)~\citep{chazal2017introduction}, with the demonstrated representational power of Graph Neural Networks (GNNs)~\citep{scarselli2008graph, battaglia2018relational, bronstein2017geometric} and refer to this synthesis as Deep Graph Mapper (DGM). Our method offers a means to visualise graphs and the complex data living on them through a GNN `lens'. Moreover, the aspects highlighted by the visualisation can be flexibly adjusted via the loss function of the network. Finally, DGM achieves progress with respect to GNN interpretability, providing a way to visualise the model and identify the mistakes it makes for node-level supervised and unsupervised learning tasks.

\begin{figure*}[ht]
    \centering
    \includegraphics[width=0.9\textwidth]{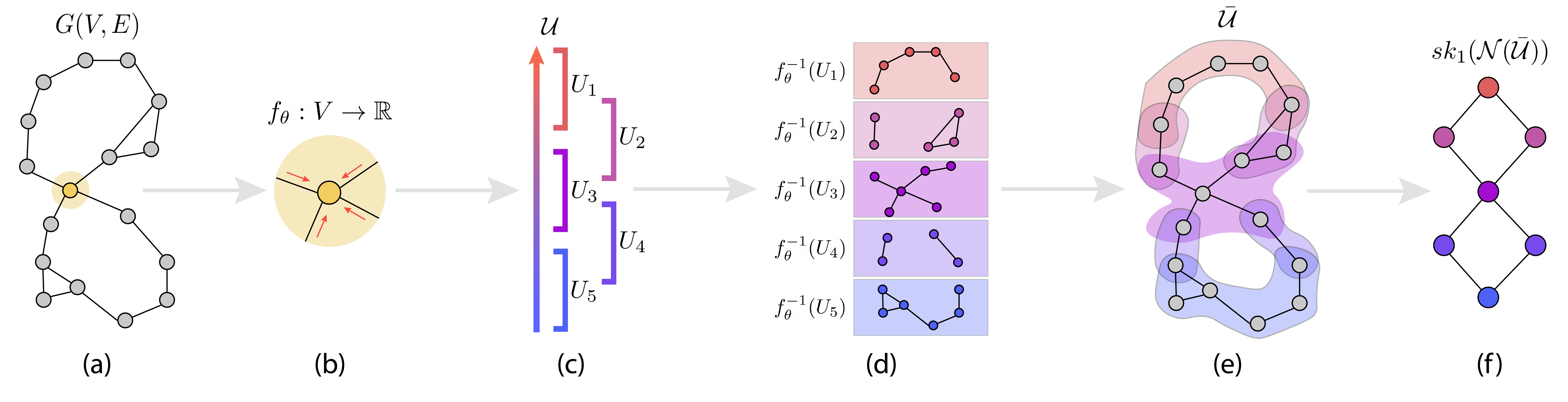}
    \caption{A cartoon illustration of The Deep Graph Mapper (DGM) algorithm where, for simplicity, the GNN approximates a `height' function over the nodes in the plane of the diagram. The input graph (a) is passed through a Graph Neural Network (GNN), which maps the vertices of the graph to a real number (the height) (b). Given a cover $\gU$ of the image of the GNN (c), the refined pull back cover $\Bar{\gU}$ is computed (d--e). The 1-skeleton of the nerve of the pull back cover provides the visual summary of the graph (f). The diagram is inspired from \citet{hajij2018mapper}.}
    \label{fig:mapper}
\end{figure*}

We then demonstrate that Mapper graph summaries are not only suitable for visualisation, but can also constitute a pooling mechanism within GNNs. We begin by proving that Mapper is a generalisation of pooling methods based on soft cluster assignments, which include state-of-the-art algorithms like minCUT~\citep{bianchi2019mincut} and DiffPool~\citep{ying2018hierarchical}. Building upon this topological perspective, we propose MPR, a novel graph pooling algorithm based on PageRank~\citep{page1999pagerank}. Our method obtains competitive or superior results when compared with state-of-the-art pooling methods on graph classification benchmarks. To summarise, our contributions are threefold: 
\begin{itemize}
    \item DGM, a topologically-grounded method for visualising graphs and the GNNs applied to them. 
    \item A proof that Mapper is a generalisation of soft cluster assignment pooling methods, including the state-of-the-art minCUT and Diff pool.
    \item MPR, a Mapper-based pooling method that achieves similar or superior results compared to state-of-the-art methods on several graph classification benchmarks.
\end{itemize}

\section{Related Work}

\subsection{Graph Visualisation and Interpretability} \label{graphvizinterp}

A number of software tools exist for visualising node-link diagrams: NetworkX~\citep{hagberg2008exploring}, Gephi~\citep{ICWSM09154}, Graphviz~\citep{gansner2000open} and NodeXL~\citep{smith2010nodexl}. However, these tools do not attempt to produce condensed summaries and consequently suffer on large graphs from the visual clutter problem illustrated in Figure \ref{fig:intro}. This makes the interpretation and understanding of the graph difficult. Some of the earlier attempts to produce visual summaries rely on grouping nodes into a set of predefined motifs \citep{motif_simplification} or compressing them in a lossless manner into modules \citep{module_simplification}. However, these mechanisms are severely constrained by the simple types of node groupings that they allow.

Mapper-based summaries for graphs have recently been considered by~\citet{hajij2018mapper}. However, despite the advantages provided by Mapper, their approach relies on hand-crafted graph-theoretic `lenses', such as the average geodesic distance, graph density functions or eigenvectors of the graph Laplacian. Not only are these methods rigid and unable to adapt well to the graph or task of interest, but they are also computationally inefficient. Moreover, they do not take into account the features of the graph. In this paper, we build upon their work by considering learnable functions (GNNs) that do not present these problems. 

Mapper visualisations are also an indirect way to analyse the behaviour of the associated `lens' function. However, visualisations that are directly oriented towards model interpretability have been recently considered by~\citet{ying2019gnnexplainer}, who propose a model capable of indicating the relevant sub-graphs and features for a given model prediction. 

\subsection{Graph Pooling}
\label{sec:rel_pooling}

Pooling algorithms have already been considerably explored within GNN frameworks for graph classification. \citet{luzhnica2019clique} propose a topological approach to pooling which coarsens the graph by aggregating its maximal cliques into new clusters. However, cliques are local topological features, whereas our MPR algorithm leverages a global perspective of the graph during pooling. Two paradigms distinguish themselves among learnable pooling layers: top-$k$ pooling based on a learnable ranking, initially adopted by~\citet{gao2019graph} (Graph U-Nets), and learning the cluster assignment~\citep{ying2018hierarchical} with additional entropy and link prediction losses for more stable training (DiffPool). Following these two trends, several variants and incremental improvements have been proposed. The top-$k$ approach is explored in conjunction with jumping-knowledge networks~\citep{cangea2018towards}, attention~\citep{lee2019self, huang2019attpool} and self-attention for cluster assignment~\citep{ranjan2019asap}. Similarly to DiffPool, the method suggested by \citet{bianchi2019mincut} uses several loss terms to enforce clusters with strongly connected nodes, similar sizes and orthogonal assignments. A different approach is also proposed by~\citet{ma2019graph}, who leverage spectral clustering for pooling.

\subsection{Topological Data Analysis in Machine Learning}
\label{sec:tdaml}

Persistent homology \citep{ph_survey} has been so far the most popular branch of TDA applied to machine learning and graphs, especially. \citet{top_signatures} and \citet{carriere2019perslay} integrated graph persistence diagrams with neural networks to obtain topology-aware models. A more advanced approach for GNNs has been proposed by  \citet{graph_filtration_learning}, who backpropagate through the persistent homology computation to directly learn a graph filtration.

Mapper, another central algorithm in TDA, has been used in deep learning almost exclusively as a tool for understanding neural networks. \citet{topology_of_learning} use Mapper to visualise the evolution of the weights of fully connected networks during training, while \citet{mapper_cnn} use it to visualise the filters computed by CNNs. To the best of our knowledge, the paper of \citet{hajij2018mapper} remains the only application of Mapper on graphs.

\section{Mapper for Visualisations}

In this section we describe the proposed integration between Mapper and GCNs. 

\subsection{Mapper on Graphs} \label{mappergraphs}

We start by reviewing Mapper \citep{Singh2007}, a topologically-motivated algorithm for high-dimensional data visualisation. Intuitively, Mapper obtains a low-dimensional image of the data that can be easily visualised. The algorithm produces an output graph that shows how clusters within the data are semantically related from the perspective of a `lens' function. The resulting graph preserves the notion of `nearness' of the input topological space, but it can compress large scale distances by connecting far-away points that are similar according to the lens. We first introduce the required mathematical background.

\begin{definition}
An \textbf{open cover} of a topological space $X$ is a collection of open sets $(U_i)_{i \in I}$, for some indexing set $I$, whose union includes $X$.
\end{definition}

For example, an open cover for the real numbers could be $\{(-\infty, 0), (-2, 3), (1, \infty) \}$. Similarly, $\{ \{v_1, v_2, v_3\}, \{v_4\}\}$ is an open cover for a set of vertices $\{v_1, v_2, v_3, v_4\}$. 

\begin{definition}
Let $X$ be a topological space, $f: X \to \sR^d, d \geq 1$ a continuous function, and $\gU = (U_i)_{i \in I}$ a cover of $\R^d$. Then, the \textbf{pull back cover} $f^*(\gU)$ of $X$ induced by $(f, \gU)$ is the collection of open sets $f^{-1}(U_i), i \in I$, for some indexing set $I$, where by $f^{-1}(U_i)$ we denote the preimage of the set $U_i$. 
\end{definition}

Given a dataset $\sX$, a carefully chosen lens function $f(\sX)$ and cover $\gU$, Mapper first computes the associated pull back cover $f^*(\gU)$. Then, using a clustering algorithm of choice, it clusters each of the open sets $f^{-1}(U_i)$ in $f^*(\gU)$. The resulting group of sets is called the \textbf{refined pull back cover}, denoted by $\Bar{\gU} = (\bar{U_j})_{{j \in J}}$ with indexing set $J$. Concretely, in this paper, the input dataset is a weighted graph $\sX = G(V, E)$ and $f:V \to \R^d$ is a function over the vertices of the graph. An illustration for these steps is provided in Figure \ref{fig:mapper} (a-e) for a `height' function $f$.

Finally, Mapper produces a new graph by taking the \textbf{$1$-skeleton of the nerve $\gN(\Bar{\gU})$ of the refined pull back cover}: a graph where the vertices are given by $(v_j)_{j \in J}$ and two vertices $v_{j_1}, v_{j_2}$ are connected if and only if $\bar{U}_{j_1} \cap \bar{U}_{j_2} \neq \emptyset$. Informally, the soft clusters formed by the refined pull back become the new nodes and clusters with common nodes are connected by an edge. This final step is illustrated in Figure \ref{fig:mapper} (f).  

Three main degrees of freedom that determine the visualisation can be distinguished within Mapper:

\textbf{The lens $f$}: In our case, the lens $f: V \to \sR^d$ is a function over the vertices, which acts as a filter that emphasises certain features of the graph. The choice of $f$ highly depends on the properties to be highlighted by the visualisation. We also refer to the co-domain of $f$ as the parametrisation space. 

\textbf{The cover $\gU$}: The choice of cover determines the resolution of the summary. Fine-grained covers will produce more detailed visualisations, while higher overlaps between the sets in the cover will increase the connectivity of the output graph. When $d = 1$, a common choice is to select a set of overlapping intervals over the real line, as in Figure \ref{fig:mapper} (c).  

\textbf{Clustering algorithm}: Mapper works with any clustering algorithm. When the dataset is a graph, a natural choice adopted by \citet{hajij2018mapper} is to take the connected components of the subgraphs induced by the vertices $f^{-1}(U_i), i \in I$ (Figure \ref{fig:mapper} (e-f)). This is also the approach we follow, unless otherwise stated.

\subsection{Seeing through the Lens of GCNs}

As mentioned in Section~\ref{graphvizinterp}, \citet{hajij2018mog} have considered a set of graph theoretic functions for the lens. However, with the exception of PageRank, all of these functions are difficult to compute on large graphs. The average geodesic distance and the graph density function require computing the distance matrix of the graph, while the eigenfunction computations do not scale beyond graphs with a few thousands of nodes. Besides their computational complexity, many real-world graphs contain features within the nodes and edges of the graphs, which are ignored by graph-theoretic summaries. 

In this work, we leverage the recent progress in the field of graph representation learning and propose a series of lens functions based on Graph Convolutional Networks (GCNs)~\citep{kipf2016semi}. We refer to this integration between Mapper and GCNs as Deep Graph Mapper (DGM). 

Unlike graph-theoretic functions, GCNs can naturally learn and integrate the features associated with the graph and its topological properties, while also scaling to large, complex graphs. Additionally, visualisations can flexibly be tuned for the task of interest by adjusting the associated loss function. 

Mapper further constitutes a method for implicitly visualising the lens function, so DGM is also a novel approach to model understanding. Figure~\ref{fig:ground-truth} illustrates how Mapper can be used to identify mistakes that the underlying GCN model makes in node classification. This showcases the potential of DGM for continuous model refinement. 

\subsection{Supervised Lens}

A natural application for DGM in the supervised domain is as an assistive tool for binary node classification. The node classifier, a function $f: V \to [0, 1]$, is an immediate candidate for the DGM lens. The usual choice for a cover $\gU$ over the real numbers is a set of $n$ equally-sized overlapping intervals, with an overlap percentage of $g$. This often produces a hierarchical (tree-like) perspective over the input graph, as shown in Figure~\ref{fig:intro}.

\begin{figure}[ht]
    \centering
    \includegraphics[width=0.7\columnwidth]{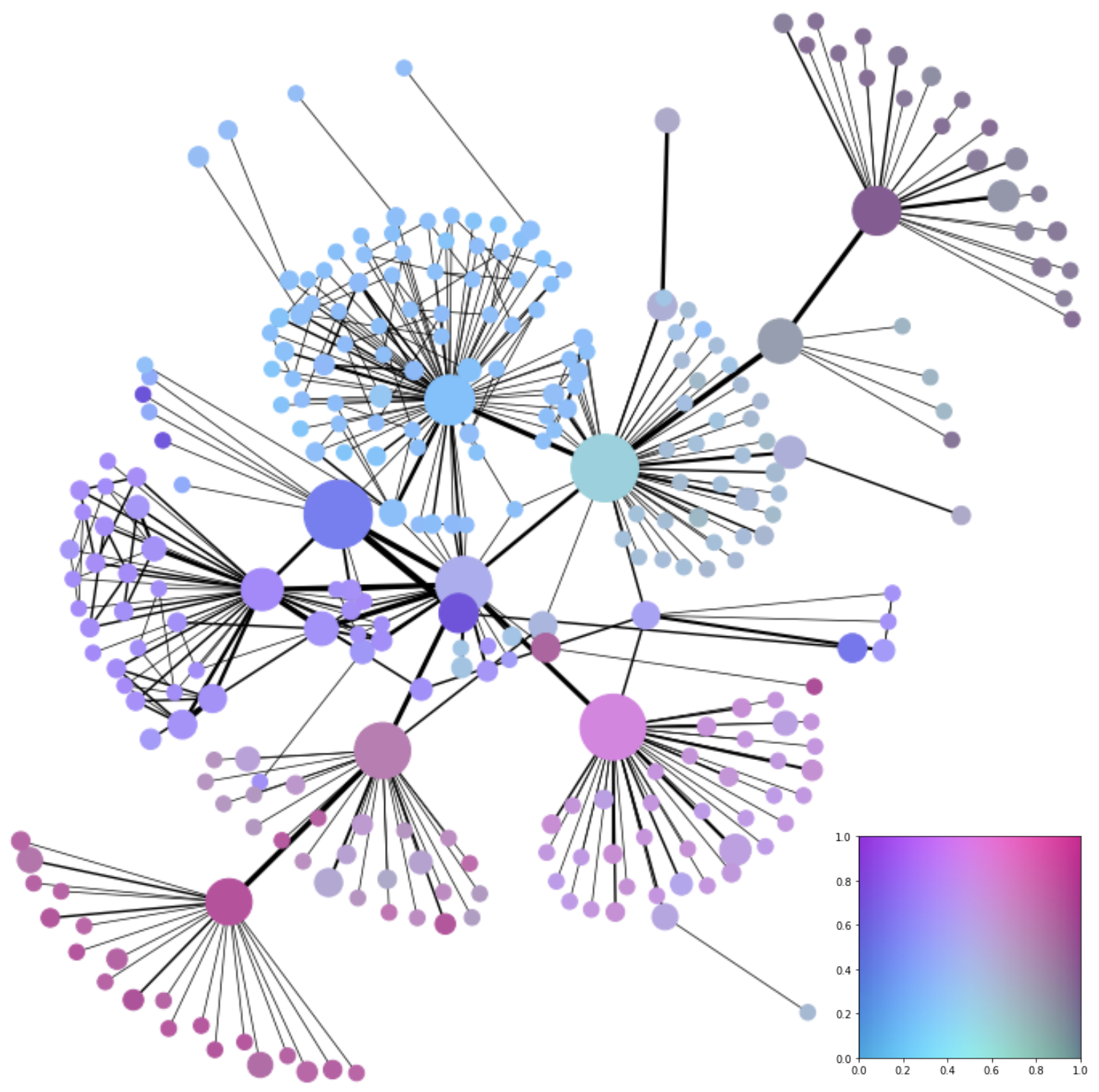}
    \caption{DGM visualisation of the Cora dataset, using a GCN classifier as the lens and a grid-cover $\gU$ with 9 cells and 10\% overlap across each axis. The two-dimensional cover is able to capture more complex relations of semantic proximity, such as high-order cliques. We use a two-dimensional colormap (bottom-right) to reflect the relationship between the nodes and their position in the parametrisation space.}
    \label{fig:cora_2d_supervised}
\end{figure}

However, most node-classification tasks involve more than two labels. A simple solution in this case is to use a dimensionality reduction algorithm such as $t$-SNE~\citep{vanDerMaaten2008} to embed the logits in $\sR^d$, where $d$ is small. Empirically, when the number of classes is larger than two, we find a 2D parametrisation space to better capture the relationships between the classes in the semantic space. Figure \ref{fig:cora_2d_supervised} includes a visualisation of the Cora dataset using a $t$-SNE embedding of the logits. Faster dimensionality reduction methods such as PCA~\citep{Jolliffe} or the recently proposed NCVis~\citep{aleks2020ncvis} could be used to scale this approach to large graphs.

\begin{figure}
    \centering
    \begin{subfigure}[t]{0.48\columnwidth}
         \centering
         \includegraphics[width=\columnwidth]{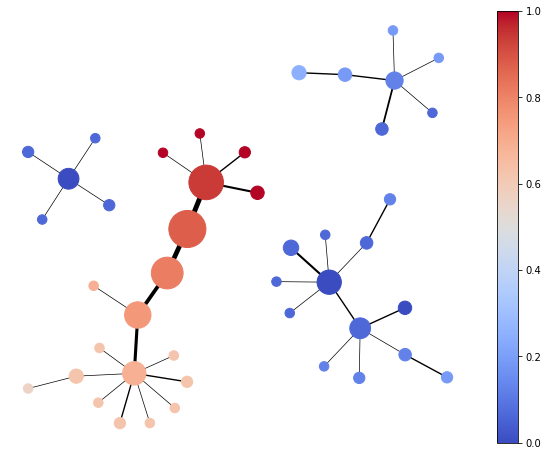}
         \caption{DGM visualisation.}
    \end{subfigure}
    ~
    \begin{subfigure}[t]{0.48\columnwidth}
         \centering
         \includegraphics[width=\columnwidth]{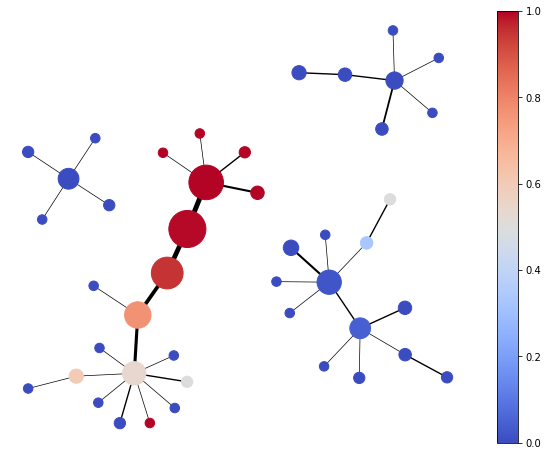}
         \caption{DGM visualisation with ground-truth colouring.}
    \end{subfigure}
    \caption{Side-by-side comparison of the DGM visualisation and the one with ground-truth labelling. The two images clearly highlight certain mistakes that the classifier is making, such as the many small nodes in the bottom-left corner that the classifier tends to label as spam (light-red), even though most of them are non-spam users (in blue).}
    \label{fig:ground-truth}
\end{figure}

In a supervised setting, DGM visualisations can also be integrated with the ground-truth labels of the nodes. The latter provide a means for visualising both the mistakes that the classifier is making and the relationships between classes. For lens trained to perform binary classification of the nodes, we colour the nodes of the summary graph proportionally to the number of nodes in the original graph that are part of that cluster and belong to the positive class. For lens that are multi-label classifiers, we colour each node with the most frequent class in its corresponding cluster. Figure~\ref{fig:ground-truth} gives the labelled summary for the binary Spam dataset, while Figure~\ref{fig:qualitative} includes two labelled visualisations for the Cora and CiteSeer datasets. 

\subsection{Unsupervised Lens}

The expressive power of GNNs is not limited to supervised tasks. In fact, many graphs do not have any labels associated with their nodes---in this case, the lens described so far, which require supervised training, could not be applied. However, the recent models for unsupervised graph representation learning constitute a powerful alternative.

Here, we use Deep Graph Infomax (DGI)~\citep{velivckovic2018deep} to compute node embeddings in $\sR^d$ and obtain a low-dimensional parametrisation of the graph nodes. DGI computes node-level embeddings by learning to maximise the mutual information (MI) between patch representations and corresponding high-level summaries of graphs. We have empirically found that applying $t$-SNE over a higher-dimensional embedding of DGI works better than learning a low-dimensional parametrisation with DGI directly. Figure~\ref{fig:qualitative} includes two labelled visualisations obtained with the unsupervised DGI lens on Cora and CiteSeer.

\begin{figure}[ht]
    \centering
    \begin{subfigure}[t]{0.31\columnwidth}
         \centering
         \includegraphics[width=\textwidth]{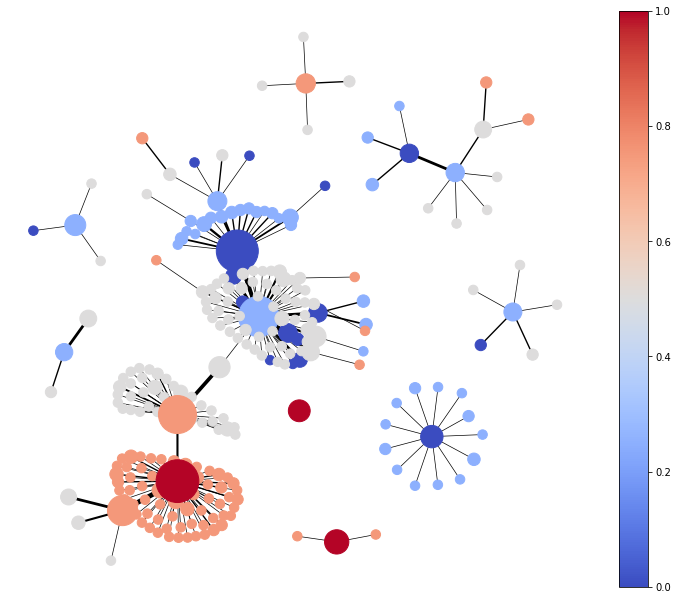}
         \caption{$n=5, g=0.2$}
    \end{subfigure}
    ~
    \begin{subfigure}[t]{0.31\columnwidth}
         \centering
         \includegraphics[width=\textwidth]{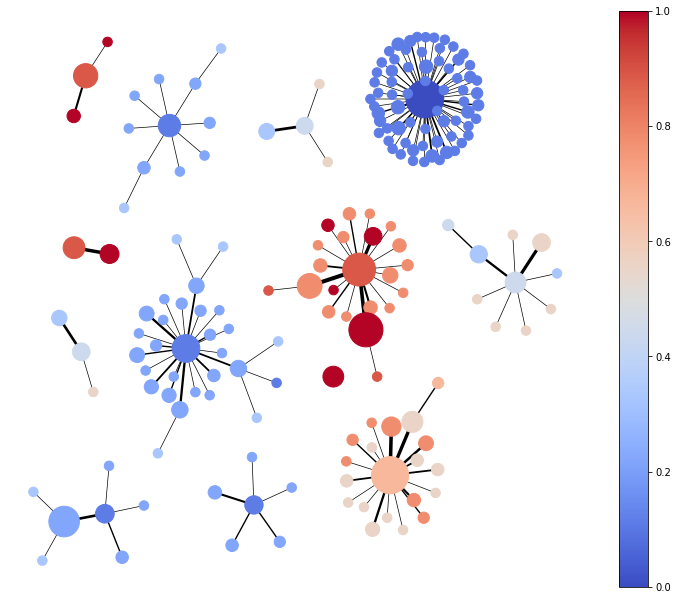}
         \caption{$n=10, g=0.2$}
    \end{subfigure}
    ~
    \begin{subfigure}[t]{0.31\columnwidth}
         \centering
         \includegraphics[width=\textwidth]{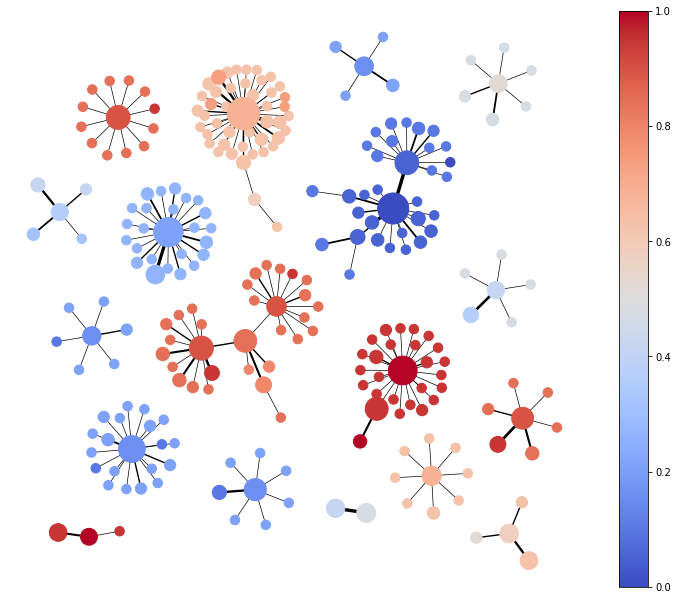}
         \caption{$n=20, g=0.2$}
    \end{subfigure}
    ~
    \begin{subfigure}[t]{0.31\columnwidth}
         \centering
         \includegraphics[width=\textwidth]{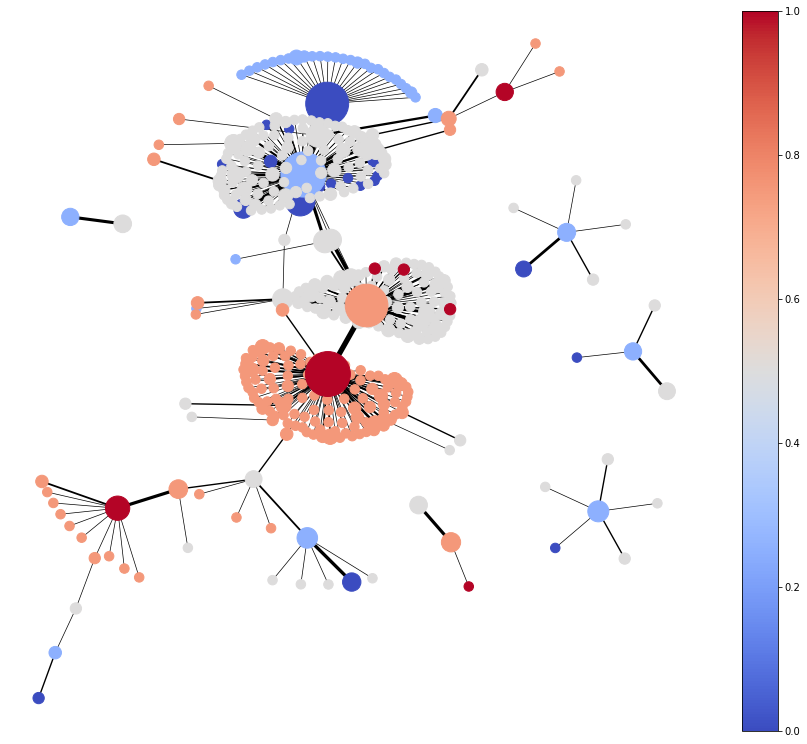}
         \caption{$n=5, g=0.4$}
    \end{subfigure}
    ~
    \begin{subfigure}[t]{0.31\columnwidth}
         \centering
         \includegraphics[width=\textwidth]{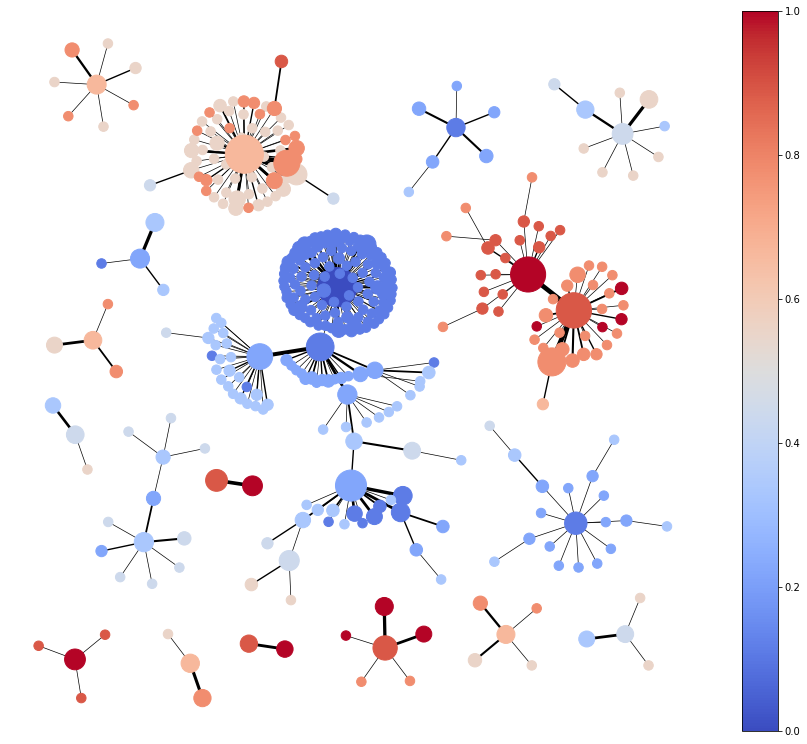}
         \caption{$n=10, g=0.4$}
    \end{subfigure}
    ~
    \begin{subfigure}[t]{0.31\columnwidth}
         \centering
         \includegraphics[width=\textwidth]{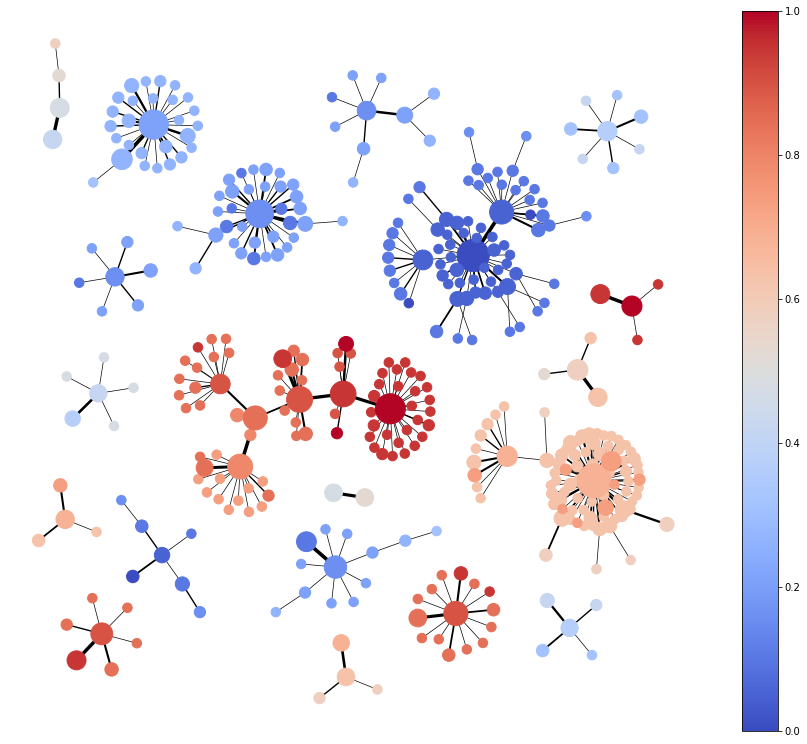}
         \caption{$n=20, g=0.4$}
    \end{subfigure}
    \caption{By adjusting the cover $\gU$, multi-resolution visualisations of the graph can be obtained. The multi-resolution perspective can help one identify the persistent features of the graph (i.e. the features that survive at multiple scales). For instance, the connection between the grey and blue nodes does not survive at all resolutions, whereas the connection between the red and orange nodes persists.}
    \label{fig:hierarhical_citeseer}
\end{figure}

\subsection{Hierarchical Visualisations}

One of the biggest advantages of Mapper is the flexibility to adjust the resolution of the visualisation by modifying the cover $\gU$. The number of sets in $\gU$ determines the coarseness of the output graph, with a larger number of sets generally producing a larger number of nodes. The overlap between these sets determines the level of connectivity between the nodes---for example, a higher degree of overlap produces denser graphs. By adjusting these two parameters, one can discover what sort of properties of the graph are persistent, consequently surviving at multiple scales, and which features are mainly due to noise. In Figure \ref{fig:hierarhical_citeseer}, we include multiple visualisations of the CiteSeer~\citep{sen2008collective} dataset at various scales---these are determined by adjusting the cover $\gU$ via its size $n$ and interval overlap percentage $g$.

\subsection{The Dimensionality of the Output Space}

\begin{figure}[ht]
    \centering
    \includegraphics[width=0.8\columnwidth]{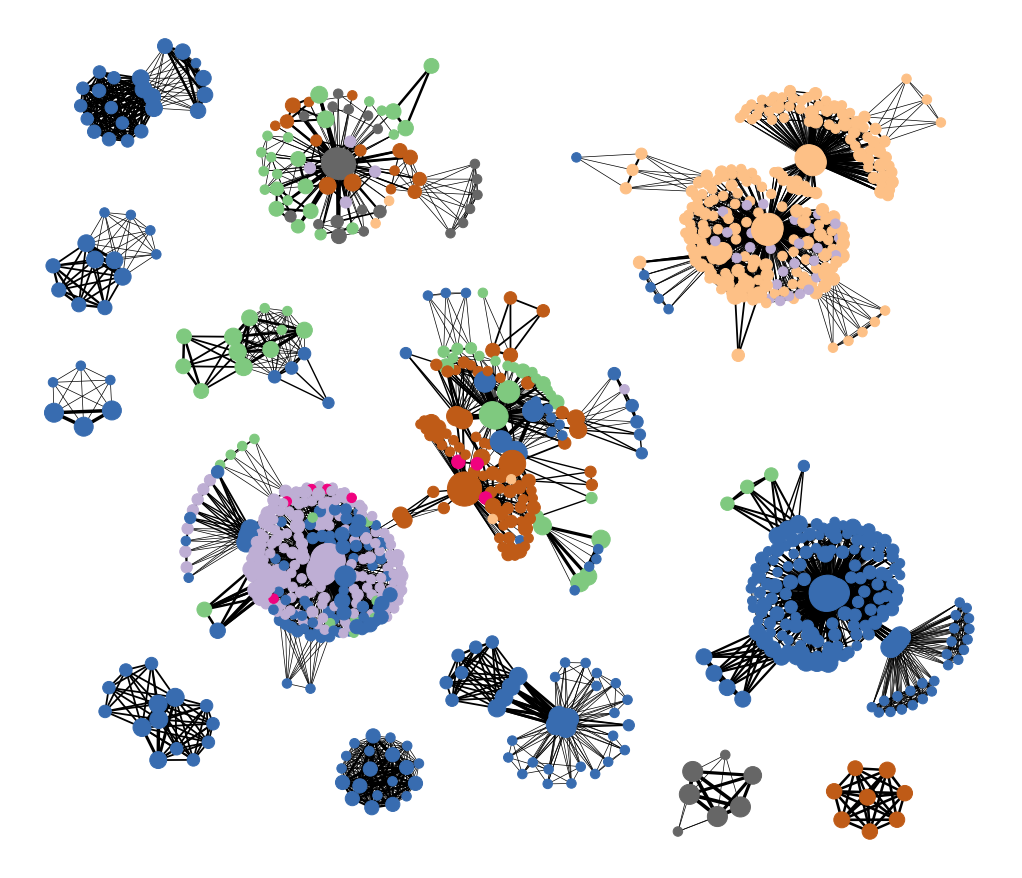}
    \caption{DGM visualisation of Cora with a 3D parametrisation and ground-truth labels. Each colour represents a different class. While higher-dimensional parametrisations can encode more complex semantic relations, the interpretability of the lens becomes more difficult.}
    \label{fig:cora_3d_labeled.}
\end{figure}

Although Mapper typically uses a one-dimensional parametric space, higher-dimensional ones have extra degrees of freedom in embedding the nodes inside them. Therefore, higher-dimensional spaces feature more complex neighbourhoods or relations of semantic proximity, as it can be seen from Figures \ref{fig:intro} (1D), \ref{fig:cora_2d_supervised} (2D) and \ref{fig:cora_3d_labeled.} (3D). 

However, the interpretability of the lens decreases as $d$ increases, for a number of reasons. First, colormaps can be easily visualised in 1D and 2D, but it is hard to visualise a 3D-colormap. Therefore, for the 3D parametrisation in Figure \ref{fig:cora_3d_labeled.}, we use dataset labels to colour the nodes. Second, the curse of dimensionality becomes a problem and the open sets of the cover are likely to contain fewer and fewer nodes as $d$ increases. Therefore, the resolution of the visualisation can hardly be adjusted in higher dimensional spaces. 

\section{Mapper for Pooling}

We now present a theoretical result showing that the graph summaries computed by Mapper are not only useful for visualisation purposes, but also as a pooling (graph-coarsening) mechanism inside GNNs. Building on this evidence, we introduce a pooling method based on PageRank and Mapper. 

\subsection{Mapper and Spectral Clustering}

The relationship between Mapper for graphs and spectral clustering has been observed by \citet{hajij2018mapper}. This link is a strong indicator that Mapper can compute `useful' clusters for pooling. We formally restate this observation below and provide a short proof.

\begin{proposition}
Let $L$ be the Laplacian of a graph $G(V, E)$ and $l_2$ the eigenvector corresponding to the second lowest eigenvalue of $L$, also known as the Fiedler vector~\citep{fiedler1973algebraic}. Then, for a function $f: V \to \sR, f(v) = l_2(v)$, outputting the entry in the eigenvector $l_2$ corresponding to node $v$ and a cover $ \gU = \{(-\infty, \eps), (-\eps, +\infty)\}$, Mapper produces a spectral bi-partition of the graph for a sufficiently small positive $\eps$.
\end{proposition}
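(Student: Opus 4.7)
The plan is to show that, for $\eps > 0$ sufficiently small, the pull back cover $f^*(\gU)$ already partitions $V$ according to the sign of the Fiedler vector $l_2$, which is precisely the classical spectral bi-partition. Concretely, I would first choose $\eps$ strictly smaller than $\min\{|l_2(v)| : v \in V,\; l_2(v) \neq 0\}$, so that no vertex has Fiedler value lying in $(-\eps, 0) \cup (0, \eps)$. This is possible because $V$ is finite.

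Next I would compute the two preimages explicitly. With this choice of $\eps$, $f^{-1}((-\infty, \eps)) = \{v : l_2(v) < 0\}$ and $f^{-1}((-\eps, +\infty)) = \{v : l_2(v) > 0\}$, up to a convention for the zero entries. These two sets are disjoint and their union is $V$, so the pull back cover is in fact a partition of the vertex set into the ``negative'' and ``positive'' sides of $l_2$, which is exactly the sign split used by the Fiedler spectral bi-partitioning heuristic.

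The clustering and nerve steps then preserve this bipartition: since the two preimages are disjoint, the connected-component refinement within each preimage can only subdivide each side further but never mixes positive with negative nodes. Consequently no refined cluster from one side overlaps a refined cluster from the other side, so the $1$-skeleton of the nerve contains no edges between the two groups. Collapsing the components of each side to a single super-node recovers a two-node graph whose vertex classes coincide with the spectral bi-partition, proving the proposition.

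The only genuine subtlety, which I expect to be the main obstacle, is the treatment of vertices where $l_2(v) = 0$: such vertices land in the overlap region $(-\eps, \eps)$ and are therefore placed into \emph{both} preimages, which would introduce an edge in the nerve connecting the two ``halves.'' In the generic case no such vertices exist. In general, the patch is either to perturb the cover so that one interval is half-open at $0$ (assigning each zero-vertex deterministically to one side), or to observe that the sign-based spectral bi-partition itself is only defined up to an arbitrary assignment of the zero entries, so any consistent tie-breaking at the overlap reproduces a valid spectral bi-partition. With this convention the two preimages become disjoint and the argument above goes through unchanged.
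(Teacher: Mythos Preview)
Your argument is correct and follows the same route as the paper: choose $\eps$ below the smallest nonzero magnitude of $l_2$, so that the two preimages coincide with the sign classes of the Fiedler vector, which is the classical spectral bi-partition. The paper's own proof is in fact briefer than yours---it simply sets $\eps < \min_v |l_2(v)|$ and declares the pull back cover to be the spectral bi-partition, implicitly assuming no zero entries and not discussing the refinement or nerve steps at all; your treatment of the $l_2(v)=0$ case and of what the connected-component clustering does to each side is additional care that the paper omits.
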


\begin{proof}
It is well known that the Fiedler vector can be used to obtain a ``good'' bi-partition of the graph based on the signature of the entries of the vector (i.e. $l_2(v) > 0$ and $l_2(v) < 0$) (please refer to~\citet{berkeley_notes} for a proof). Therefore, by setting $\eps$ to a sufficiently small positive number $\eps < \min_v |l_2(v)|$, the obtained pull back cover is a spectral bi-partition of the graph. 
\end{proof}

The result above indicates that Mapper is a generalisation of spectral clustering. As the latter is strongly related to min-cuts~\citep{stanford_notes}, the proposition also provides a link between Mapper and min-cuts.

\subsection{Mapper and Soft Cluster Assignments}

Let $G(V, E)$ be a graph with self-loops for each node and adjacency matrix $\mathbf{A}$. \emph{Soft cluster assignment pooling methods} use a soft cluster assignment matrix $\mathbf{S} \in \sR^{N \times K}$, where $N$ is the number of nodes in the graph and $K$ is the number of clusters, and compute the new adjacency matrix of the graph via $\mathbf{A}' = \mathbf{S}^T\mathbf{A}\mathbf{S}$. Equivalently, two soft clusters become connected if and only if there is a common edge between them.

\begin{proposition}
There exists a graph $G'(V', E')$ derived from $G$, a soft cluster assignment $\mathbf{S}'$ of $G'$ based on $\mathbf{S}$, and a cover $\gU$ of $\bigtriangleup_{K-1}$, such that the 1-skeleton of the nerve of the pull back cover induced by $(\mathbf{S}', \gU)$ is isomorphic with the graph defined by $\mathbf{A}'$.
\end{proposition}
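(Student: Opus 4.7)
The goal is to realise the graph defined by $\mathbf{A}'=\mathbf{S}^T\mathbf{A}\mathbf{S}$ (vertex set $\{1,\ldots,K\}$, with edge $(k_1,k_2)$ whenever $\mathbf{A}'_{k_1,k_2}>0$) as the $1$-skeleton of a Mapper nerve. I would take the lens to be the soft assignment itself, viewed as a map into $\bigtriangleup_{K-1}$, and I would cover the simplex by its $K$ open sub-simplices $U_k=\{y\in\bigtriangleup_{K-1}:y_k>0\}$; these are manifestly open and cover $\bigtriangleup_{K-1}$ because every point has at least one strictly positive coordinate. With such a lens and cover, two cover opens $U_{k_1},U_{k_2}$ have intersecting preimages exactly when some vertex of $V'$ carries strictly positive mass on both coordinates $k_1$ and $k_2$, so the entire task reduces to designing $G'$ and $\mathbf{S}'$ so that such a witness exists if and only if the edge structure of $G$ and the assignment $\mathbf{S}$ conspire to make $\mathbf{A}'_{k_1,k_2}>0$.

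\textbf{Construction and main equivalence.}
Let $G'(V',E')$ be the edge subdivision of $G$: for each $(u,v)\in E$ introduce a fresh vertex $w_{uv}$ and replace $(u,v)$ by the two edges $(u,w_{uv})$ and $(w_{uv},v)$. Set $\mathbf{S}'_u=\mathbf{S}_u$ for $u\in V$ and $\mathbf{S}'_{w_{uv}}=\tfrac{1}{2}(\mathbf{S}_u+\mathbf{S}_v)$, which remains a convex combination and so still lies in $\bigtriangleup_{K-1}$. Define the lens $f(x):=\mathbf{S}'_x$ and take $\gU=\{U_1,\ldots,U_K\}$ as above. The main step is then the equivalence
\[
  f^{-1}(U_{k_1})\cap f^{-1}(U_{k_2})\neq\emptyset \iff \mathbf{A}'_{k_1,k_2}>0.
\]
For $(\Leftarrow)$, pick $(u,v)\in E$ with $\mathbf{S}_{u,k_1}\mathbf{S}_{v,k_2}>0$; by construction $\mathbf{S}'_{w_{uv}}$ is strictly positive on both $k_1$ and $k_2$, so $w_{uv}$ lies in the intersection (when $u=v$ is a self-loop, the original vertex $u$ already witnesses the intersection). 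For $(\Rightarrow)$, a witness is either an original vertex $u$, in which case $\mathbf{S}_{u,k_1},\mathbf{S}_{u,k_2}>0$ and the self-loop at $u$ contributes the positive summand $\mathbf{S}_{u,k_1}\mathbf{A}_{u,u}\mathbf{S}_{u,k_2}$ to $\mathbf{A}'_{k_1,k_2}$, or a subdivision vertex $w_{uv}$, in which case a four-way split on which of $u,v$ carries the positive $k_1$- and $k_2$-mass identifies either a self-loop at $u$ or at $v$, or the edge $(u,v)$ itself, as a positive contributor to $\mathbf{A}'_{k_1,k_2}$. Since the nerve has exactly $K$ vertices indexed by the $U_k$, this equivalence is precisely the required graph isomorphism.

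\textbf{Main obstacle.}
The delicate point, and the one on which the argument hinges, is the $(\Rightarrow)$ direction when a \emph{single original} vertex $u$ witnesses the intersection: here it is essential that $G$ carries a self-loop at every node, for otherwise such a $u$ would not force $\mathbf{A}'_{k_1,k_2}>0$ and the nerve could contain spurious edges. A secondary concern is whether ``pull back cover'' should be read as the refined pull back that Mapper normally produces by connected-component clustering; if so, the construction can be patched by attaching to $V'$, for each $k$, a dummy vertex $z_k$ with $\mathbf{S}'_{z_k}$ equal to the $k$-th standard basis vector and joined by an edge to every node carrying strictly positive $k$-th mass. This forces each $f^{-1}(U_k)$ to be connected in $G'$ (so the refinement yields one cluster per cover element) without introducing any new preimage intersections, since $z_k$ belongs only to $f^{-1}(U_k)$.
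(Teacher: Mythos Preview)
Your proof is correct and takes a genuinely different route from the paper's. The paper builds $G'$ by \emph{expanding each node $v$ into a clique of size $\deg(v)$} and wiring these cliques together along the original edges; it then defines $\mathbf{S}'$ as the \emph{1-hop expansion} (the row-normalised product $\mathbf{A}_{G'}\mathbf{S}^*$) of the inherited assignment $\mathbf{S}^*$, and proves via a dedicated lemma that two preimages intersect in $G'$ if and only if the corresponding clusters in $G$ share an edge, exploiting the fact that every expanded node has at most one external neighbour. Your edge-subdivision construction with midpoint averaging is considerably lighter: it bypasses the 1-hop expansion machinery entirely, and the equivalence becomes almost immediate because the subdivision vertex $w_{uv}$ directly encodes the pair of assignment rows appearing in any nonzero summand of $\mathbf{A}'$. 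Both arguments use the same cover $U_k=\{y\in\bigtriangleup_{K-1}:y_k>0\}$, and both lean essentially on the self-loop hypothesis for the $(\Rightarrow)$ direction, which you flag correctly. The paper explicitly states that it uses the trivial clustering (no connected-component refinement), so your dummy-vertex patch is not needed for the proposition as stated, though it is a clean fix if one insists on the refined pull back. One trade-off worth noting: in your construction the edges of $G'$ play no role in the argument, whereas the paper's more elaborate $G'$ carries edge structure that it later reuses for the Structural DGM variant.
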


This shows that Mapper is a generalisation of soft-cluster assignment methods. A detailed proof and a diagrammatic version of it can be found in the supplementary material. Note that this result implicitly uses an instance of Mapper with the simplest possible clustering algorithm that assigns all vertices in each open set of the pull back cover to the same cluster (same as no clustering).

We hope that this result will enable theoreticians to study pooling operators through the topological and statistical properties of Mapper~\citep{carriere2018statistical}. At the same time, we encourage practitioners to take advantage of it and design new pooling methods in terms of a well-chosen lens function $f$ and cover $\gU$ for its image. 

A remaining challenge is designing differentiable pull back cover operations to automatically learn a parametric lens and cover. We leave this exciting direction for future work and focus on exploiting the proven connection---we propose in the next section a simple, non-differentiable pooling method based on PageRank and Mapper that performs surprisingly well.

Additionally, we use this result to propose \textbf{Structural DGM} (SDGM), a version of DGM where edges represent structural connections between the clusters in the refined pull back cover, rather than semantic connections. SDGM can be found in Appendix B. 

\subsection{PageRank Pooling}

\subsubsection{Model}

For the graph classification task, each example $\text{G}$ is represented by a tuple $(\mathbf{X}, \mathbf{A})$, where $\mathbf{X}$ is the node feature matrix and $\mathbf{A}$ is the adjacency matrix. Both our graph embedding and classification networks consist of a sequence of graph convolutional layers~\cite{kipf2016semi}; the $l$-th layer operates on its input feature matrix as follows:
\begin{equation}
    \mathbf{X}_{l+1} = \sigma(\hat{\mathbf{D}}^{-\frac{1}{2}}\hat{\mathbf{A}}\hat{\mathbf{D}}^{-\frac{1}{2}}\mathbf{X}_l\mathbf{W}_l),
\end{equation}
where $\hat{\mathbf{A}} = \mathbf{A} + \mathbf{I}$ is the adjacency matrix with self-loops, $\hat{\mathbf{D}}$ is the normalised node degree matrix and $\sigma$ is the activation function.

After $E$ layers, the embedding network simply outputs node features $\mathbf{X}_{L_E}$, which are subsequently processed by a pooling layer to coarsen the graph. The classification network first takes as input node features of the Mapper-pooled (Section \ref{mpool}) graph\footnote{Note that one or more \{embedding $\rightarrow$ pooling\} operations may be sequentially performed in the pipeline.}, $\mathbf{X}_{\text{MG}}$, and passes them through $L_C$ graph convolutional layers. Following this, the network computes a graph summary given by the feature-wise node average and applies a final linear layer which predicts the class:
\begin{equation}
    y = \text{softmax}\big(\frac{1}{N} \sum_{i=1}^N \mathbf{X}_{L_C}\mathbf{W_{\text{f}}} + \mathbf{b_{\text{f}}}\big),
\end{equation}
where $N$ is the number of nodes in the final graph.

\begin{figure*}[ht]
    \centering
    \begin{subfigure}[t]{0.23\textwidth}
         \centering
         \includegraphics[width=\textwidth]{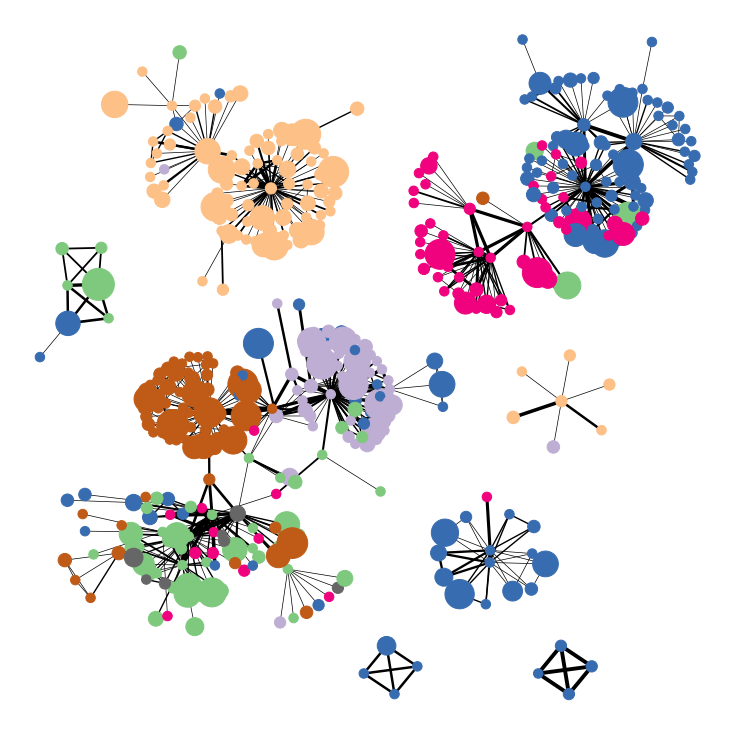}
         \caption{DGM on Cora}
    \end{subfigure}
    ~
    \begin{subfigure}[t]{0.23\textwidth}
         \centering
         \includegraphics[width=\textwidth]{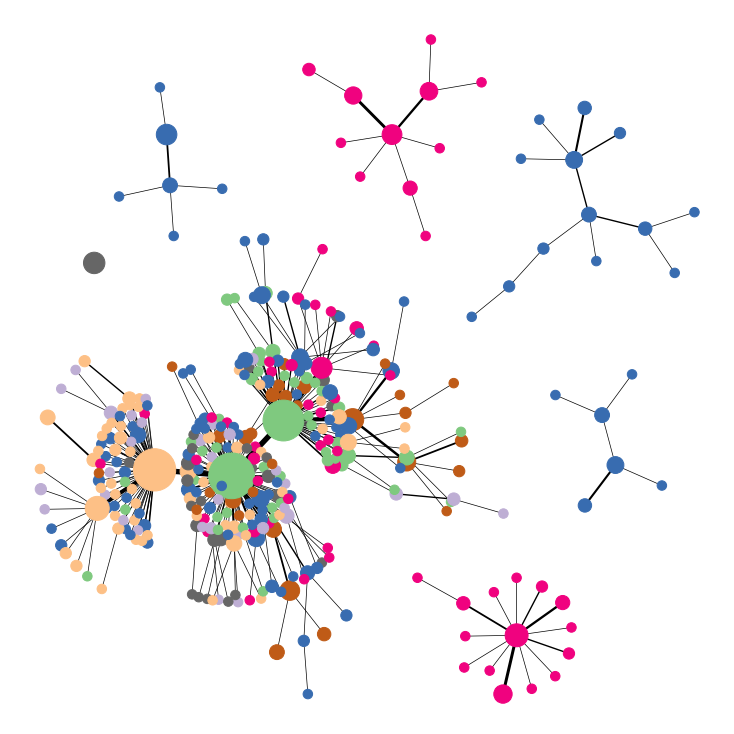}
         \caption{Density Mapper on Cora}
    \end{subfigure}
    ~
    \begin{subfigure}[t]{0.23\textwidth}
         \centering
         \includegraphics[width=\textwidth]{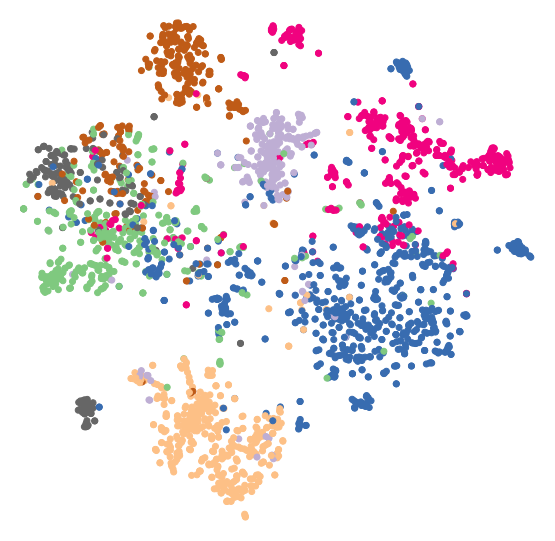}
         \caption{DGI $t$-SNE on Cora}
    \end{subfigure}
    ~
    \begin{subfigure}[t]{0.23\textwidth}
         \centering
         \includegraphics[width=\textwidth]{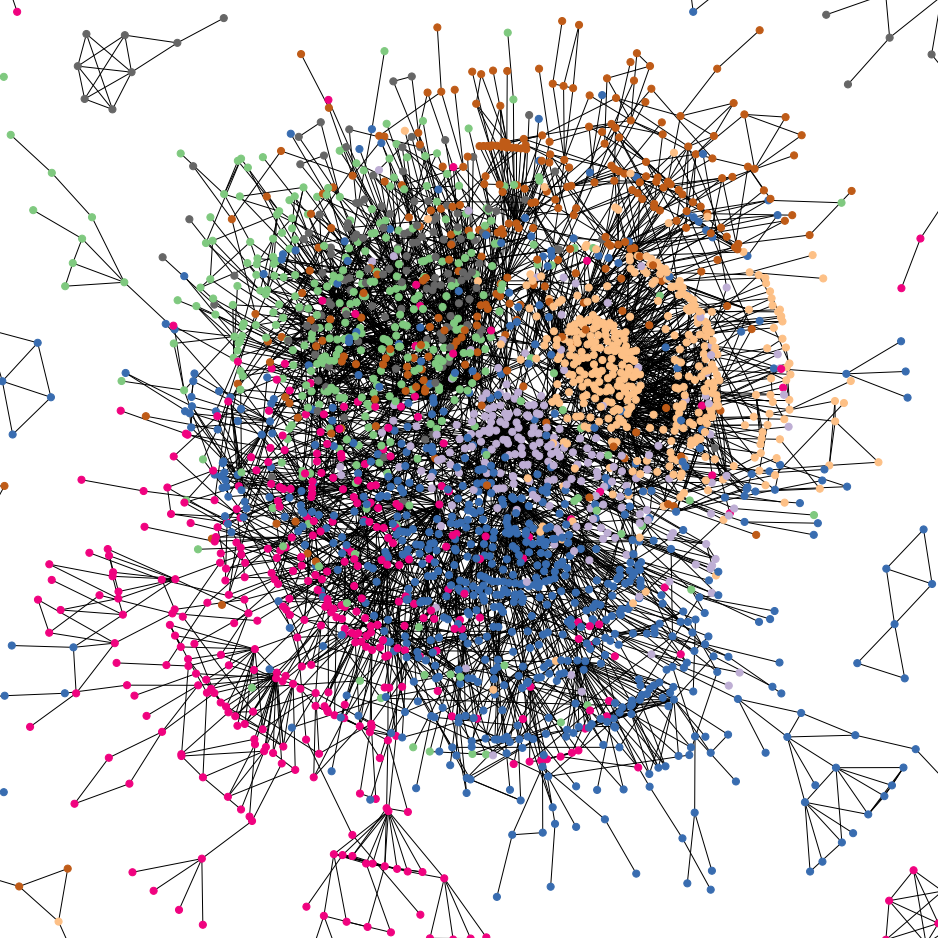}
         \caption{Graphviz layout on Cora}
    \end{subfigure}
    ~
    \begin{subfigure}[t]{0.23\textwidth}
         \centering
         \includegraphics[width=\textwidth]{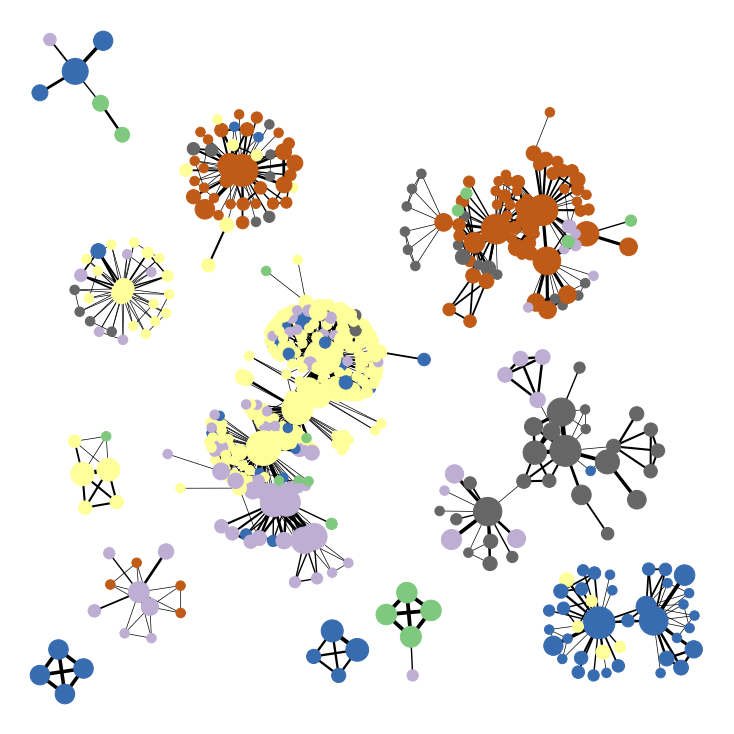}
         \caption{DGM on CiteSeer}
    \end{subfigure}
    ~
    \begin{subfigure}[t]{0.23\textwidth}
         \centering
         \includegraphics[width=\textwidth]{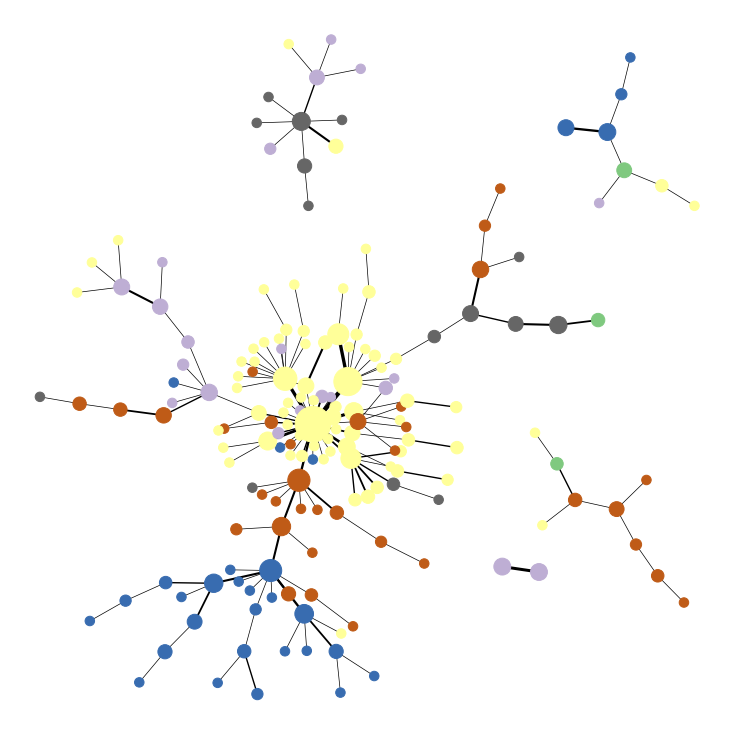}
         \caption{Density Mapper on CiteSeer}
    \end{subfigure}
    ~
    \begin{subfigure}[t]{0.23\textwidth}
         \centering
         \includegraphics[width=\textwidth]{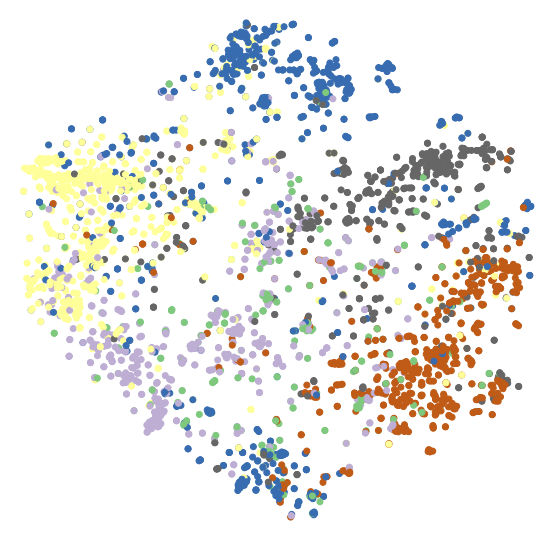}
         \caption{DGI $t$-SNE on CiteSeer}
    \end{subfigure}
      ~
    \begin{subfigure}[t]{0.23\textwidth}
         \centering
         \includegraphics[width=\textwidth]{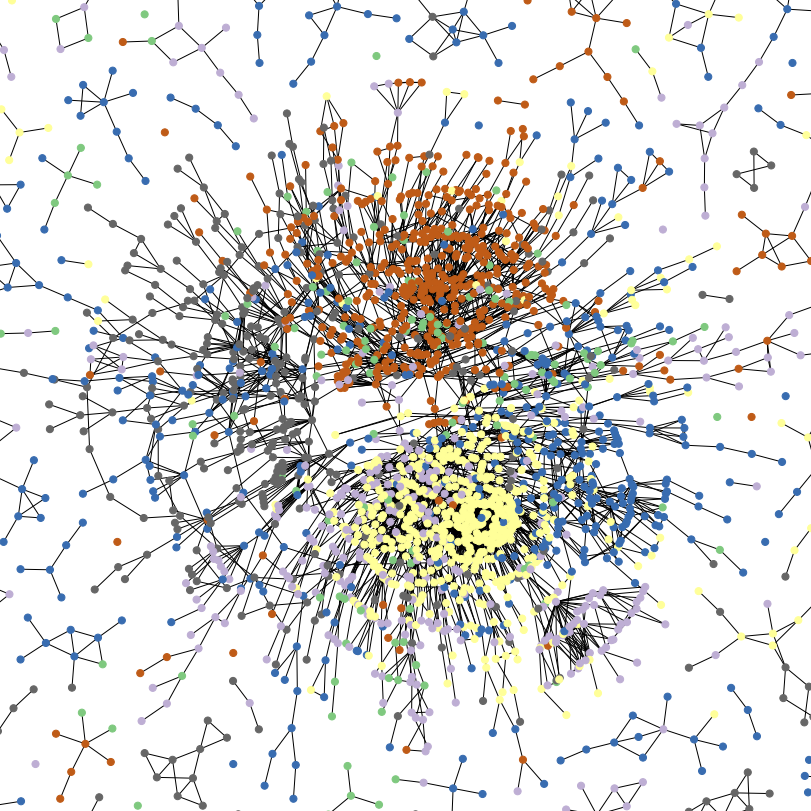}
         \caption{Graphviz layout on Citeseer}
    \end{subfigure}
    \caption{Qualitative comparison between DGM (a, e) and Mapper with an RBF graph density function (b, f). The DGI $t$-SNE plot (c, g) and the Graphviz visualisation of the full graph (d, h) are added for reference. The first and second row show plots for Cora and CiteSeer, respectively. The nodes are coloured with the dataset node labels. DGM with unsupervised lens implicitly makes all dataset classes appear in the visualisation clearly separated, which does not happen in the density visualisation. DGM also adds a new layer of information relative to the $t$-SNE plot by mapping the semantic information back to the original graph.}
    \label{fig:qualitative}
\end{figure*}

\subsubsection{Mapper-based PageRank (MPR) Pooling} \label{mpool}

We now describe the pooling mechanism used in our graph classification pipeline, which we adapt from the Mapper algorithm. The first step is to assign each node a real number in $[0,1]$, achieved by computing a lens function $f : V \rightarrow \mathbb{R}$ that is given by the normalised PageRank (PR)~\cite{page1999pagerank} of the nodes. The PageRank function assigns an importance value to each of the nodes based on their connectivity, according to the following recurrence relation:
\begin{equation} \label{eq:pr}
    f(\mathbf{X}_{L_E})_{i} \overset{\Delta}{=} \mathbf{PR}_i = \sum_{j \in N(i)} \frac{\mathbf{PR}_j}{| N(i) |},
\end{equation}
where $N(i)$ represents the set of neighbours of the $i$-th node in the graph. The resulting scores are values in $[0, 1]$ which reflect the probability of a random walk through the graph to end in a given node. Computing the $\mathbf{PR}$ vector is achieved using NetworkX~\citep{hagberg2008exploring} via power iteration, as $\mathbf{PR}$ is the principal eigenvector of the transition matrix $\mathcal{M}$ of the graph:
\begin{equation}
    \mathbf{PR} = \big( \alpha \mathcal{M} + (1-\alpha) \frac{1}{N} \mathbf{E} \big) \mathbf{PR},
\end{equation}
where $\mathbf{E}$ is a matrix with all elements equal to 1 and $\alpha \in [0, 1]$ is the probability of continuing the random walk at each step; a value closer to $0$ implies the nodes would receive a more uniform ranking and tend to be clustered in a single node. We choose the widely-adopted $\alpha = 0.85$ and refer the reader to~\citep{boldi2005pagerank} for more details.

We use the previously described overlapping intervals cover $\gU$ and determine the pull back cover induced by $(f, \gU)$. This effectively builds a soft cluster assignment matrix $\mathbf{S} \in \sR^{|\text{G}| \times |\text{MG}|}$ from nodes in the original graph to ones in the pooled graph:

\begin{equation}
    S_{ij} = \frac{\mathbb{I}_{i \in f^{-1}(U_j)}}{| \{U_k | i \in f^{-1}(U_k) \} |} 
\end{equation}

where $U_n$ is the $n$-th overlapping interval in the cover $\gU$ of $[0, 1]$. It can be observed that the resulting clusters contain nodes with similar PageRank scores, as determined by eq.~\ref{eq:pr}. Therefore, our pooling method intuitively merges the (usually few) highly connected nodes in the graph, and at the same time clusters the (typically many) dangling nodes that have a normalised PageRank score closer to zero.

Finally, the mapping $\mathbf{S}$ is used to compute features for the new nodes (i.e. the soft clusters formed by the pull back), $\mathbf{X}_\text{MG} = \mathbf{X}_{L_E} \mathbf{S}$, and the corresponding adjacency matrix, $\mathbf{A}_\text{MG} = \mathbf{S}^T\mathbf{A}\mathbf{S}$.

It is important that graph classification models are node permutation invariant since one graph can be represented by any tuple $(\mathbf{XP}, \mathbf{AP})$, where $\mathbf{P}$ is a node permutation. Bellow, we state a positive result in this regard for the MPR pooling procedure. 
\begin{proposition} \label{lemma:pr}
The PageRank pooling operator defined above is permutation-invariant.
\end{proposition}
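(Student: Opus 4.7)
The plan is to verify that every step of the MPR pipeline respects node relabeling, and then to chain these together. Concretely, let $\mathbf{P}$ be any $N\times N$ permutation matrix and let $(\mathbf{X}, \mathbf{A})$ and $(\mathbf{P}^T\mathbf{X}, \mathbf{P}^T\mathbf{A}\mathbf{P})$ be two orderings of the same input graph. I want to show that the coarsened tuple $(\mathbf{X}_{\mathrm{MG}}, \mathbf{A}_{\mathrm{MG}})$ produced by the pooling operator is literally the same object in both cases (not merely isomorphic), because the columns of $\mathbf{S}$ are indexed by the fixed cover $\gU$ of $[0,1]$ rather than by an arbitrary ordering of the input nodes.

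First, I would note that each graph convolutional layer $\mathbf{X}_{l+1} = \sigma(\hat{\mathbf{D}}^{-1/2}\hat{\mathbf{A}}\hat{\mathbf{D}}^{-1/2}\mathbf{X}_l \mathbf{W}_l)$ is permutation-equivariant: conjugating $\hat{\mathbf{A}}$ and $\hat{\mathbf{D}}$ by $\mathbf{P}$ simply introduces factors of $\mathbf{P}^T$ and $\mathbf{P}$ that collapse around $\mathbf{X}_l$, yielding $\mathbf{P}^T \mathbf{X}_{l+1}$. Composing $E$ such layers shows that the embedding $\mathbf{X}_{L_E}$ is equivariant under $\mathbf{P}$. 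Second, I would show that the PageRank lens $f$ is equivariant, i.e.\ $f(\mathbf{P}^T\mathbf{A}\mathbf{P}) = \mathbf{P}^T f(\mathbf{A})$. This follows because the transition matrix $\mathcal{M}$ transforms as $\mathbf{P}^T \mathcal{M} \mathbf{P}$, the teleport term $\tfrac{1}{N}\mathbf{E}$ is invariant under conjugation by any permutation, and therefore the principal eigenvector of $\alpha\mathcal{M} + (1-\alpha)\tfrac{1}{N}\mathbf{E}$ is permuted accordingly; power iteration initialised from the uniform vector makes this rigorous, since each iterate transforms by $\mathbf{P}^T$.

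Third, I would use the fact that the cover $\gU = \{U_1, \dots, U_K\}$ is a fixed set of overlapping intervals of $[0,1]$ that depends on no node ordering. Hence the soft assignment entries
\begin{equation}
S_{ij} \;=\; \frac{\mathbb{I}_{f(\mathbf{X}_{L_E})_i \in U_j}}{|\{U_k : f(\mathbf{X}_{L_E})_i \in U_k\}|}
\end{equation}
transform as $\mathbf{S} \mapsto \mathbf{P}^T \mathbf{S}$, because the row index $i$ is permuted by $\mathbf{P}$ while the column index $j$ (an interval of $\gU$) is untouched. Finally, plugging in the transformations gives
\begin{equation}
\mathbf{A}_{\mathrm{MG}}' \;=\; (\mathbf{P}^T\mathbf{S})^T (\mathbf{P}^T\mathbf{A}\mathbf{P}) (\mathbf{P}^T\mathbf{S}) \;=\; \mathbf{S}^T \mathbf{A} \mathbf{S} \;=\; \mathbf{A}_{\mathrm{MG}},
\end{equation}
and an analogous cancellation for the pooled features (whether written as $\mathbf{S}^T\mathbf{X}_{L_E}$ or the bilinear form in the paper) yields $\mathbf{X}_{\mathrm{MG}}' = \mathbf{X}_{\mathrm{MG}}$.

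The only subtle step, and the one I would spend the most care on, is the PageRank equivariance: the recurrence in eq.~\ref{eq:pr} has a unique fixed point only under mild assumptions (connectedness of the teleport-augmented chain), and one must argue that the iterative solver returns the equivariantly permuted vector rather than an arbitrary fixed point. I would handle this by invoking the uniqueness of the stationary distribution of the teleport-augmented Markov chain together with the fact that both the transition operator and the uniform initialisation transform equivariantly under $\mathbf{P}$, so the limit must as well. With that step secured, the rest of the proof is a bookkeeping argument that each operation either commutes with $\mathbf{P}$ or is stabilised by the conjugation $\mathbf{S}^T(\cdot)\mathbf{S}$.
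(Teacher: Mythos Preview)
Your argument is correct and, in fact, considerably more complete than the paper's own proof. The paper dispatches the proposition in two sentences: it cites Altman (2005, Axiom~3.1) for the claim that PageRank is permutation-invariant, and then asserts that the invariance of the pooling operator ``follows trivially''. You take the same high-level route---establish that the lens respects relabeling, then propagate this through the assignment matrix and the $\mathbf{S}^T(\cdot)\mathbf{S}$ aggregation---but you make each step explicit rather than appealing to a citation or to triviality. In particular, your direct argument for PageRank equivariance via conjugation of the transition operator and permutation-stability of the power-iteration iterates is self-contained, whereas the paper outsources this to Altman's axiomatic treatment; and your observation that the cover $\gU$ indexes the columns of $\mathbf{S}$ independently of any node ordering, together with the explicit cancellation $(\mathbf{P}^T\mathbf{S})^T(\mathbf{P}^T\mathbf{A}\mathbf{P})(\mathbf{P}^T\mathbf{S}) = \mathbf{S}^T\mathbf{A}\mathbf{S}$, is exactly the content the paper leaves implicit in the word ``trivially''. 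One minor terminological point: the paper speaks of PageRank being permutation-\emph{invariant}, whereas you (more precisely) treat it as permutation-\emph{equivariant} on node-indexed vectors; your usage is the correct one for the argument you give, and the two reconcile once one passes to the cluster-indexed outputs $\mathbf{X}_{\mathrm{MG}}$ and $\mathbf{A}_{\mathrm{MG}}$.
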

\begin{proof}
First, we note that the PageRank function is permutation invariant and refer the reader to~\citet[Axiom 3.1]{altman2005pagerank} for the proof. It then follows trivially that the PageRank pooling operator is also permutation-invariant.
\end{proof}

\section{Experiments}

We now provide a qualitative evaluation of the DGM visualisations and benchmark MPR pooling on a set of graph classification tasks. 

\subsection{Tasks}

The DGM visualiser is evaluated on two popular citation networks: CiteSeer and Cora~\citep{sen2008collective}. We further showcase the applicability of DGM within a pooling framework, reporting its performance on a variety of settings: social (Reddit-Binary), citation networks (Collab) and chemical data (D\&D, Proteins)~\cite{KKMMN2016}.

\subsection{Qualitative Results}

In this section, we qualitatively compare DGM using a DGI lens against a Mapper instance that uses a fine-tuned graph density function $f(v)=\sum_{u \in V} e^{\frac{-d(u, v)}{\delta}}$ based on an RBF kernel (Figure~\ref{fig:qualitative}) with $d$ being the distance matrix of the graph. For reference, we also include a full-graph visualisation using a Graphviz layout and a t-SNE plot of the DGI embeddings that are used as the lens.

\subsection{Pooling Evaluation} \label{expsetup}

We adopt a 10-fold cross-validation approach to evaluating the graph classification performance of MPR and other competitive state-of-the-art methods. The random seed was set to $0$ for all experiments (with respect to dataset splitting, shuffling and parameter initialisation), in order to ensure a fair comparison across architectures. All models were trained on a Titan Xp GPU, using the Adam optimiser~\cite{kingma2014adam} with early stopping on the validation set, for a maximum of 30 epochs. We report the classification accuracy using 95\% confidence intervals calculated for a population size of 10 (the number of folds).

\subsubsection{Models} \label{models} We compare the performance of MPR to two other pooling approaches that we identify mathematical connections with---minCUT~\cite{bianchi2019mincut} and DiffPool~\cite{ying2018hierarchical}. Additionally, we include Graph U-Net~\cite{gao2019graph} in our evaluation, as it has been shown to yield competitive results, while performing pooling from the alternative perspective of a learnable node ranking; we denote this approach by \emph{Top-$k$} in the remainder of this section.

We optimise MPR with respect to its cover cardinality $n$, interval overlap percentage $g$ at each pooling layer, learning rate and hidden size. The Top-$k$ architecture is evaluated using the code provided in the official repository\footnote{\url{github.com/HongyangGao/Graph-U-Nets}}, where separate configurations are defined for each of the benchmarks. The minCUT architecture is represented by the sequence of operations described by~\citet{bianchi2019mincut}: \emph{MP(32)-pooling-MP(32)-pooling-MP(32)-GlobalAvgPool}, followed by a linear softmax classifier. The \emph{MP(32)} block represents a message-passing operation performed by a graph convolutional layer with 32 hidden units:
\begin{equation}
    \mathbf{X}^{(t+1)} = \text{ReLU}(\tilde{\mathbf{A}}\mathbf{X}^{(t)}\mathbf{W}_m + \mathbf{X}^{(t)}\mathbf{W}_s),
\end{equation}
where $\tilde{\mathbf{A}} = \mathbf{D}^{-\frac{1}{2}}\mathbf{A}\mathbf{D}^{-\frac{1}{2}}$ is the symmetrically normalised adjacency matrix and $\mathbf{W}_m, \mathbf{W}_s$ are learnable weight matrices representing the message passing and skip-connection operations within the layer. The DiffPool model follows the same sequence of steps. Full details of the model architectures and hyperparameters can be found in the supplementary material.

\subsubsection{Results} \label{results}

The graph classification performance obtain by these models is reported in Table~\ref{table:opt-results}. By suitably adjusting MPR hyperparameters, we achieve the best results for D\&D, Proteins and Collab and closely follow minCUT on Reddit-Binary. These results showcase the utility of Mapper for designing better pooling operators. 

\begin{table}[t]
\begin{center}
\resizebox{\columnwidth}{!}{%
\begin{tabular}{ ccccc } 
 \toprule
  & MPR & Top-$k$ & minCUT & DiffPool \\ 
 \midrule
 D\&D & $\mathbf{78.2 \pm 3.4}$ & $75.1 \pm 2.2$ & $77.6 \pm 3.1$ & $77.9 \pm 2.4$ \\ 
 Proteins & $\mathbf{75.2 \pm 2.2}$ & $74.8 \pm 3.04$ & $73.5 \pm 2.9$ & $74.2 \pm 0.3$ \\ 
 Collab & $\mathbf{81.5 \pm 1.0}$ & $75.0 \pm 1.1$ & $79.9 \pm 0.8$ & $81.3 \pm 0.1$ \\ 
 Reddit-B & $86.3 \pm 4.8$ & $74.9 \pm 7.4$ & $\mathbf{87.2 \pm 5.0}$ & $79.0 \pm 1.1$ \\ 
 \bottomrule
\end{tabular}
}
\caption{Results obtained by optimised architectures on classification benchmarks. Accuracy measures with 95\% confidence intervals are reported.}
\label{table:opt-results}
\end{center}
\end{table}

\section{Conclusion and Future Work}

We have introduced Deep Graph Mapper, a topologically-grounded method for producing informative hierarchical graph visualisations with the help of GNNs. We have shown these visualisations are not only helpful for understanding various graph properties, but can also aid in refining graph models. Additionally, we have proved that Mapper is a generalisation of soft cluster assignment methods, effectively providing a bridge between graph pooling and the TDA literature. Based on this connection, we have proposed a simple Mapper-based PageRank pooling operator, competitive with several state-of-the-art methods on graph classification benchmarks. Future work will focus on back-propagating through the pull back computation to automatically learn a lens and cover. Lastly, we plan to extend our methods to spatio-temporally evolving graphs.

\section*{Acknowledgement}

We would like to thank Petar Veli\v{c}kovi\'c, Ben Day, Felix Opolka, Simeon Spasov, Alessandro Di Stefano, Duo Wang, Jacob Deasy, Ramon Vi\~nas, Alex Dumitru and Teodora Reu for their constructive comments. We are also grateful to Teo Stoleru for helping with the diagrams.

\bibliography{example_paper}
\bibliographystyle{icml2020}

\cleardoublepage

\begin{figure*}[!ht]
    \centering
    \includegraphics[width=0.9\textwidth]{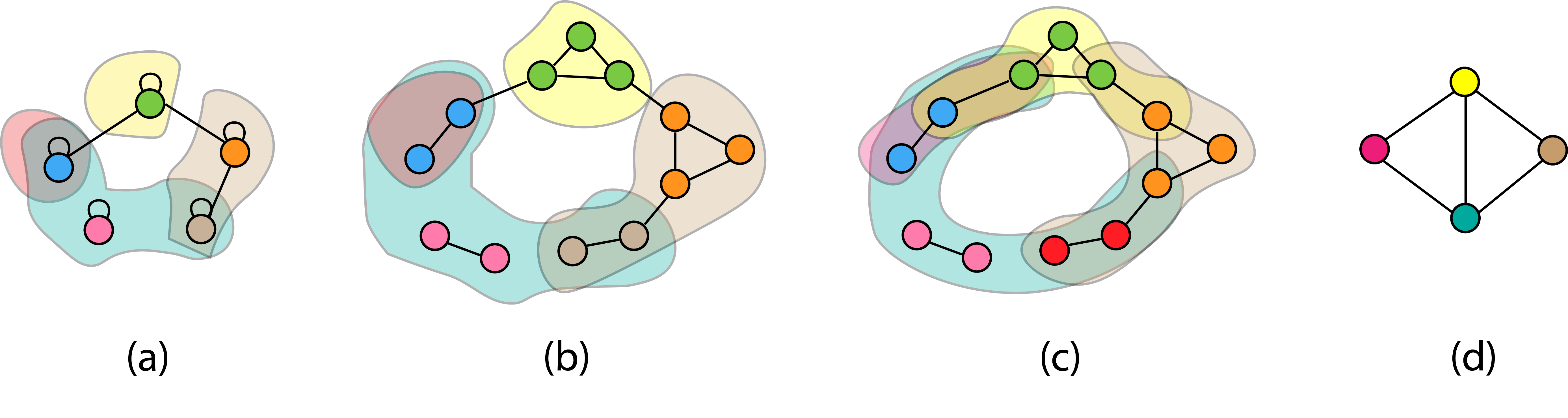}
    \caption{An illustration of the proof for Proposition 4.2. Connecting the clusters connected by an edge in the original graph (a) is equivalent to constructing an expanded graph with an expanded cluster assignment (b), doing a 1-hop expansion of the expanded cluster assignment (c) and finally taking the 1-skeleton of the nerve (d) of the cover obtained in (c).}
    \label{fig:mapper_proof}
\end{figure*}

\section*{Appendix A: Proof of Proposition 4.2}
\label{app:mapper_proof}

Throughout the proof we consider a graph $G(V, E)$ with self loops for each of its $N$ nodes. The self-loop assumption is not necessary, but it elegantly handles a number of degenerate cases involving nodes isolated from the other nodes in its cluster. We refer to the edges of a node which are not a self-loop as \textbf{external edges}.

Let $s: V \to \bigtriangleup_{K-1}$ be a soft cluster assignment function that maps the vertices to the $(K-1)$-dimensional unit simplex. We denote by $s_k(v)$ the probability that vertex $v$ belongs to cluster $k \leq K$ and $\sum_k^K s_k(v) = 1$. This function can be completely specified by a cluster assignment matrix $\mathbf{S} \in \sR^{N \times K}$ with $S_{ik} = s_k(i)$. This is the soft cluster assignment matrix computed by minCut and Diff pool via a GCN. 

\begin{definition}
Let $G(V, E)$ be a graph with self-loops for each node. The expanded graph $G'(V', E')$ of $G$ is the graph constructed as follows. For each node $v \in V$ with at least one external edge, there is a clique of nodes $\{v_1, \ldots, v_{deg(v)}\} \subseteq V'$. For each external edge $(v, u) \in E$, a pair of nodes from their corresponding cliques without any edges outside their clique are connected. Additionally, isolated nodes become in the new graph two nodes connected by an edge. 
\end{definition}

Essentially, the connections between the nodes in the original graph are replaced by the connections between the newly formed cliques such that every new node is connected to at most one node outside its clique. An example of an expanded graph is shown in Figure \ref{fig:mapper_proof} (b). The nodes in the newly formed cliques are coloured similarly to node from the original graph they originate from. 

\begin{definition}
Let $G(V, E)$ be a graph with self loops for each node, $s$ a soft cluster assignment function for it, and $G'(V', E')$ the expanded graph of $G$. Then the expanded soft cluster assignment $s^*$ is a cluster assignment function for $G'$, where $s^*_k(v_i) = s_k(v)$ for all the nodes $v_i \in V'$ in the corresponding clique of $v \in V$.
\end{definition}

In plain terms, all the nodes in the clique inherit through $s^*$ the cluster assignments of the corresponding node from the original graph. This is also illustrated by the coloured contours of the expanded graph in Figure \ref{fig:mapper_proof} (b). 

\begin{definition}
Let $\mS$ be a soft cluster assignment matrix for a graph $G(V, E)$ with adjacency matrix $\mA$. The \textbf{1-hop expansion} of assignment $\mathbf{S}$ with respect to graph $G$ is a new cluster assignment function $s': V \to \bigtriangleup_{K-1}$ induced by the row-normalised version of the cluster assignment matrix $\mathbf{S}'= \mathbf{A}\mathbf{S}$. 
\end{definition}

As we shall now prove, the 1-hop expansion simply extends each soft cluster from $\mathbf{S}$ by adding its 1-hop neighbourhood as in Figure \ref{fig:mapper_proof} (c).

\begin{lemma}
\label{lemma:hop}
An element of the soft cluster assignment matrix $S'_{i,k} \neq 0$ if and only if node $i$ is connected to a node $j$ (possibly $i=j$), which is part of the soft cluster $k$ of the assignment induced by $\mathbf{S}$.
\end{lemma}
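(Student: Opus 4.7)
The plan is to unpack the definition $\mathbf{S}' = \mathbf{A}\mathbf{S}$ entry-wise and exploit the fact that both factors are non-negative, so no cancellation can occur. Writing
\[
S'_{i,k} \;=\; \sum_{j=1}^{N} A_{ij}\, S_{j,k},
\]
each summand is non-negative because $A_{ij} \in \{0,1\}$ and $S_{j,k} \in [0,1]$. Hence the sum is zero if and only if every summand is zero, and it is nonzero if and only if there exists some index $j$ with $A_{ij} = 1$ and $S_{j,k} \neq 0$. The first condition $A_{ij} = 1$ says exactly that $i$ is connected to $j$ in $G$, where the self-loop convention ensures the case $j=i$ is included. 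The second condition $S_{j,k} \neq 0$ says that $j$ has nonzero assignment mass in cluster $k$, i.e. $j$ is part of the soft cluster $k$ induced by $\mathbf{S}$. This gives the desired biconditional at the level of the unnormalised product.

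Next I would observe that the row-normalisation step used to turn $\mathbf{A}\mathbf{S}$ into a proper cluster assignment function $s' : V \to \bigtriangleup_{K-1}$ does not change the sparsity pattern. Indeed, for every node $i$ the self-loop guarantees $A_{ii} = 1$, and the rows of $\mathbf{S}$ sum to $1$, so row $i$ of $\mathbf{A}\mathbf{S}$ has total mass at least $1$ and in particular is nonzero; dividing by this strictly positive row sum preserves zero and nonzero entries. Consequently the claim about $S'_{i,k} \neq 0$ transfers verbatim from the unnormalised matrix to the normalised one.

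I would close by stating the equivalence in the form the lemma gives: $S'_{i,k} \neq 0$ iff some neighbour $j$ of $i$ (possibly $j=i$) satisfies $S_{j,k} \neq 0$, i.e.\ belongs to soft cluster $k$. The only place where one must be slightly careful is the degenerate case of a node that would otherwise be isolated; the self-loop assumption baked into the definition of $G$ handles this cleanly, since it guarantees $i$ is always ``connected'' to itself and hence $S'_{i,k} \neq 0$ whenever $S_{i,k} \neq 0$. I do not expect any real obstacle here: the proof is essentially a one-line observation about non-negative matrix multiplication, and the main subtlety is simply confirming that row normalisation and the self-loop convention do not introduce any edge cases that break the equivalence.
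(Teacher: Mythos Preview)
Your proof is correct and follows essentially the same approach as the paper: expand $S'_{ik} = \sum_j A_{ij} S_{jk}$ and use non-negativity of the summands to conclude the sum vanishes iff every term does. You are in fact more careful than the paper, which leaves the non-negativity observation implicit and does not explicitly verify that the row-normalisation step preserves the sparsity pattern.
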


\begin{proof}
By definition $S'_{ik} = \sum_j^N A_{ij} S_{jk} = 0$ if and only if $A_{ij} S_{jk} = 0$, for all $j$. This can happen if and only if $A_{ij} = 0$ (nodes $i$ and $j$ are not connected by an edge) or $S_{jk} = 0$ (node $j$ does not belong to soft cluster $k$ defined by $\mathbf{S}$) for all $j$. Therefore, $S'_{ik} \neq 0$ if and only if there exists a node $j$ such that $i$ is connected to $j$ and $j$ belongs to soft cluster $k$ defined by $\mathbf{S}$.
\end{proof}

\begin{corollary}
\label{corrolary:self}
Nodes that are part of a cluster $k$ defined by $\mathbf{S}$, are also part of $k$ under the assignment $\mathbf{S}'$.
\end{corollary}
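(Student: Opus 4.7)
The plan is to apply Lemma 4.1 directly, exploiting the global assumption (stated at the beginning of Appendix A) that every node of $G$ carries a self-loop. Specifically, suppose node $i$ belongs to cluster $k$ under $\mathbf{S}$, i.e.\ $S_{ik} \neq 0$. I want to conclude that $S'_{ik} \neq 0$, so that $i$ still belongs to cluster $k$ under the row-normalised assignment $\mathbf{S}' = \mathbf{A}\mathbf{S}$.

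First, I would note that the self-loop assumption gives $A_{ii} = 1$, so in the language of Lemma 4.1 the node $i$ is connected to the node $j := i$. Second, since by hypothesis $S_{jk} = S_{ik} \neq 0$, the node $j$ belongs to cluster $k$ under $\mathbf{S}$. Thus the ``if'' direction of Lemma 4.1 applies with the witness $j = i$, yielding $S'_{ik} \neq 0$. Finally, since the cluster assignment induced by $\mathbf{S}'$ is obtained by row-normalisation of $\mathbf{S}'$, a zero entry stays zero and a nonzero entry stays nonzero, so $i$ is still in cluster $k$ under $\mathbf{S}'$.

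There is essentially no obstacle here: the result is an immediate one-line consequence of Lemma 4.1 together with the self-loop convention. The only thing to flag is \emph{why} the self-loop assumption is needed --- without it, a node could belong to a cluster $k$ under $\mathbf{S}$ only through its own contribution, yet have all of its neighbours assigned to other clusters, in which case the 1-hop expansion would remove $i$ from $k$. The self-loop ensures that each node always ``votes for itself'', which is exactly what makes clusters monotone under 1-hop expansion. I would present the proof as a two-sentence argument invoking Lemma 4.1 with $j = i$, and then a brief remark justifying the role of the self-loop assumption.
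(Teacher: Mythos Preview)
Your proposal is correct and matches the paper's own proof exactly: the paper simply states that the corollary ``immediately follows from Lemma~\ref{lemma:hop} and the fact that each node has a self-loop,'' which is precisely your argument with $j=i$. Your additional remarks on row-normalisation and on why the self-loop assumption is essential are accurate elaborations beyond what the paper spells out.
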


\begin{proof}
This immediately follows from Lemma \ref{lemma:hop} and the fact that each node has a self-loop. 
\end{proof}

\begin{lemma}
\label{lemma:soft_clust}
The adjacency matrix $\mathbf{A'} = \mathbf{S^T}\mathbf{A}\mathbf{S}$ defines a new graph where the clusters induced by $S$ are connected if and only if there is a common edge between them.
\end{lemma}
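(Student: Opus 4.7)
The plan is to reduce the claim to an elementary observation about the bilinear form $\mathbf{S}^T\mathbf{A}\mathbf{S}$ by exploiting the fact that all three matrices involved have non-negative entries, so that no cancellation can mask a zero in the sum.

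Concretely, I would expand the $(k,\ell)$ entry as
$$\mathbf{A}'_{k\ell} \;=\; \sum_{i,j=1}^{N} S_{ik}\, A_{ij}\, S_{j\ell},$$
and observe that every summand is non-negative: $S_{ik}, S_{j\ell} \geq 0$ because $\mathbf{S}$ is simplex-valued, and $A_{ij} \geq 0$ by definition of the adjacency matrix. Hence $\mathbf{A}'_{k\ell} = 0$ iff every summand is zero, iff for every pair $(i,j)$ at least one of $S_{ik}$, $A_{ij}$, $S_{j\ell}$ vanishes. Taking the contrapositive, $\mathbf{A}'_{k\ell} \neq 0$ iff there exists a pair $(i,j)$ with $S_{ik} > 0$, $A_{ij} > 0$, and $S_{j\ell} > 0$ simultaneously, which by the standing convention means an edge $(i,j) \in E$ such that $i$ has positive membership in cluster $k$ and $j$ has positive membership in cluster $\ell$ — i.e., a common edge between the two soft clusters.

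The main obstacle here is conceptual rather than technical: one must fix the convention that ``node $i$ belongs to soft cluster $k$'' is synonymous with $S_{ik} > 0$, and that ``clusters $k$ and $\ell$ share a common edge'' means precisely the existence of an edge $(i,j) \in E$ with $i$ in cluster $k$ and $j$ in cluster $\ell$. Once this interpretation is pinned down, the argument is essentially a one-line consequence of non-negativity, and no spectral or combinatorial tools are needed. The only bookkeeping subtlety arises in the degenerate case $k = \ell$, where the ``common edge'' may be a self-loop or an intra-cluster edge; this is handled automatically by the preceding definitions, which equip every vertex of $G$ with a self-loop.
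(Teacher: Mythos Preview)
Your proof is correct and follows essentially the same approach as the paper: both arguments exploit the non-negativity of all entries of $\mathbf{S}$ and $\mathbf{A}$ to conclude that the sum defining $\mathbf{A}'_{k\ell}$ vanishes if and only if every summand does. The only cosmetic difference is that the paper factors the product as $\mathbf{S}^T(\mathbf{A}\mathbf{S})$ and invokes the preceding Lemma~\ref{lemma:hop} to interpret the entries of $\mathbf{A}\mathbf{S}$, whereas you expand the full double sum $\sum_{i,j} S_{ik}A_{ij}S_{j\ell}$ in one shot; your version is slightly more self-contained but otherwise identical in substance.
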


\begin{proof}
Let $\mathbf{L} = \mathbf{A}\mathbf{S}$. Then, $A'_{ij} = \sum_k^N S^{T}_{ik} L_{kj} = 0$ if and only if $S^{T}_{ik} = 0$ (node $k$ does not belong to cluster $i$) or $L_{kj} = 0$ (node $k$ is not connected to any node belonging to cluster $j$ by Lemma \ref{lemma:hop}), for all $k$. Therefore, $A'_{ij} \neq 0$ if and only if there exists a node $k$ such that $k$ belongs to cluster i and $k$ is connected to a node from cluster $j$. 
\end{proof}

This result shows that soft cluster assignment methods connect clusters that have at least one edge between them. We will use this result to show that a Mapper construction obtains an identical graph. 

Let $s'$ be the 1-hop expansion of the expanded soft cluster assignment of graph $G'$. Let the soft clusters induced by $s'$ be $\gA = \{A_1, A_2, \ldots, A_K\}$. Additionally, let $\gB = \{B_1, B_2, \dots, B_K\}$ be an open cover of $\bigtriangleup_{K-1}$ with $B_k = \{\mathbf{x} \in \bigtriangleup_{K-1} | \mathbf{x}_k > 0\}$. Then the pull back cover induced by $(s', \gB)$ is $\gA$ since $s'^{(-1)}(B_k) = A_k$ (i.e. all nodes with a non-zero probability of belonging to cluster k). 

\begin{lemma}
\label{lemma:nerve}
Two clusters $A_x$ and $A_y$ have a non-empty intersection in the expanded graph if and only if their corresponding clusters $C_x$, $C_y$ in the original graph have a common edge between them. 
\end{lemma}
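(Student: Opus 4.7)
The plan is to prove both directions by carefully bookkeeping the correspondence between the assignment $s$ on $V$ and its lift $s'$ on $V'$. The key bridge is that every node in the clique of $v \in V$ inherits $v$'s entire soft assignment through $s^*$, and hence, by Corollary \ref{corrolary:self} applied inside $G'$, also through the 1-hop expansion $s'$; meanwhile Lemma \ref{lemma:hop} tells us that an edge in $G'$ between $w$ and a node lying in cluster $k$ of $s^*$ places $w$ in cluster $k$ of $s'$. Combining these two facts with the geometry of $G'$ --- each node has its entire clique as internal neighbours and at most one external neighbour --- makes both implications essentially mechanical.

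For the $(\Rightarrow)$ direction, let $w \in A_x \cap A_y$ and let $v \in V$ be the node of $G$ whose clique contains $w$. By Lemma \ref{lemma:hop} applied to $s' = \mathbf{A}_{G'} \mathbf{S}^*$, there exist neighbours $w_x, w_y$ of $w$ in $G'$ with $w_x$ in cluster $x$ of $s^*$ and $w_y$ in cluster $y$ of $s^*$. Each of $w_x, w_y$ is either (i) a member of $v$'s clique, whose $s^*$-value equals $s(v)$, or (ii) the unique external neighbour of $w$, which belongs to the clique of some $u \in N(v) \setminus \{v\}$ and has $s^*$-value equal to $s(u)$. A short case analysis --- both internal; one internal and one external; both external, in which case they must coincide --- produces in every case a pair of vertices in $C_x$ and $C_y$ joined by an edge of $G$, possibly via a self-loop.

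For the $(\Leftarrow)$ direction, take $u \in C_x$ and $v \in C_y$ with $(u,v) \in E$. If $u = v$, every node of $u$'s clique lies simultaneously in clusters $x$ and $y$ of $s^*$, and hence, by Corollary \ref{corrolary:self}, also of $s'$, so it sits in $A_x \cap A_y$. If $u \neq v$, the external edge $(u,v)$ contributes an edge of $G'$ between some $w_u$ in $u$'s clique and some $w_v$ in $v$'s clique; then $w_u$ inherits $s(u)$ and so lies in cluster $x$ of $s^*$ and, by Corollary \ref{corrolary:self}, of $s'$, while its adjacency to $w_v$ (which lies in cluster $y$ of $s^*$) places it in cluster $y$ of $s'$ by Lemma \ref{lemma:hop}. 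Either way $A_x \cap A_y \neq \emptyset$.

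\textbf{Main obstacle.} The trickiest step is the forward case analysis. The expanded graph $G'$ was specifically engineered so that each vertex carries at most one external edge, and the argument has to lean on exactly this fact to avoid manufacturing edges of $G$ that do not exist. In particular, the degenerate subcase where both witnesses $w_x, w_y$ are external and therefore must coincide, and the subcase where $w_x = w$ via a self-loop, are where it is easiest to slip up; in both situations the required common edge in $G$ has to be supplied by a self-loop, which is precisely where the blanket assumption that $G$ has self-loops earns its keep.
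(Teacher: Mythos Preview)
Your proposal is correct and follows essentially the same approach as the paper: both directions are argued via Lemma~\ref{lemma:hop} and Corollary~\ref{corrolary:self}, exploiting that each vertex of $G'$ has all its clique-mates as neighbours and at most one external neighbour. The only notable difference is that your forward case analysis is slightly more careful --- you explicitly treat the subcase where both witnesses $w_x, w_y$ are external and hence must coincide, whereas the paper's enumeration (``either both in the clique, or one in and one out'') tacitly folds this into the other cases.
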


\begin{proof}
If direction: By Corollary \ref{corrolary:self}, the case of a self edge becomes trivial. Now, let $v \in C_x$ and $u \in C_y$ be two nodes connected by an external edge in the original graph. Then in the expanded graph $G'(V', E')$, there will be clique nodes $v_i, u_i \in V'$ such that $(v_i, u_i) \in E'$. By taking the 1-hop expansion of the extended cluster assignment, both $v_i$ and $v_j$ will belong to $A_x, A_y$ by Lemma \ref{lemma:hop}, since they are in each other's 1-hop neighbourhood. Since we have chosen the clusters and the nodes arbitrarily, this proves this direction. 

Only if direction: Let $C^*_x$ and $C^*_y$ be the (expanded) clusters in the expanded graph corresponding to the clusters $C_x$ and $C_y$ in the original graph. Let node $v_i$ be part of the non-empty intersection between soft clusters $A_x$ and $A_y$ defined by $\mS'$ in the expanded graph $G'$. By Lemma \ref{lemma:hop}, $v_i$ belongs to $A_x$ if and only if there exists $v_j \in V'$ such that $(v_i, v_j) \in E'$ and $v_j \in C^*_x$. Similarly, there must exist a node $v_k \in V'$ such that $(v_i, v_k) \in E'$ and $v_k \in C^*_y$. By the construction of $G'$, either both $v_j, v_k$ are part of the clique $v_i$ is part of, or one of them is in the clique, and the other is outside the clique. 

Suppose without loss of generality that $v_j$ is in the clique and $v_k$ is outside the clique. Then, $v_i \in C^*_x$ since they share the same cluster assignment. By the construction of $G'$ the edge between the corresponding nodes of $v_i$ and $v_k$ in the original graph $G$ is an edge between $C_x$ and $C_y$. Similarly, if both $v_j$ and $v_k$ are part of the same clique with $v_i$, then they all originate from the same node $v \in C_x, C_y$ in the original graph. The self-edge of $v$ is an edge between $C_x$ to $C_y$.  
\end{proof}

\begin{proposition}
Let $G(V, E)$ be a graph with self loops and adjacency matrix $\mathbf{A}$. The graphs defined by $sk_1(\gN(\gA))$ and $\mathbf{S}^T\mathbf{A}\mathbf{S}$ are isomorphic.
\end{proposition}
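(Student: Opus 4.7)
The plan is to combine the two preceding lemmas (Lemma 4.3 on the behavior of $\mathbf{S}^T\mathbf{A}\mathbf{S}$ and Lemma 4.4 on when pull back clusters intersect) to exhibit an explicit isomorphism. The natural candidate map is $\phi$ which sends the vertex of $sk_1(\gN(\gA))$ indexed by $A_k$ to the vertex of the graph defined by $\mathbf{S}^T\mathbf{A}\mathbf{S}$ indexed by the $k$-th original cluster $C_k$. Since both graphs have exactly $K$ vertices, one per cluster index, $\phi$ is a bijection on vertices by construction.

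Next, I would verify that $\phi$ preserves adjacency in both directions. By definition of the $1$-skeleton of the nerve, vertices $A_x$ and $A_y$ are adjacent in $sk_1(\gN(\gA))$ if and only if $A_x \cap A_y \neq \emptyset$. Applying Lemma 4.4, this intersection is non-empty if and only if the original clusters $C_x$ and $C_y$ share an edge in $G$ (counting self-loops, which the self-loop assumption handles uniformly). Applying Lemma 4.3, $C_x$ and $C_y$ share an edge in $G$ if and only if the entry $(\mathbf{S}^T\mathbf{A}\mathbf{S})_{xy}$ is non-zero, i.e., there is an edge between $\phi(A_x)$ and $\phi(A_y)$ in the graph defined by $\mathbf{S}^T\mathbf{A}\mathbf{S}$. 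Chaining these equivalences gives that $\phi$ preserves edges in both directions, which is exactly the definition of a graph isomorphism.

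The proposition is therefore essentially a corollary of the two lemmas, with the only real content being the bookkeeping that the clusters indexed in the nerve construction and those indexed in the matrix $\mathbf{S}^T\mathbf{A}\mathbf{S}$ really are the same objects under $\phi$. The main obstacle I anticipate is not the logical chain itself but the care required in the degenerate cases: isolated vertices (which by the paper's convention get a self-loop, and in the expanded graph become a pair joined by an edge) and singleton cliques. I would explicitly note that the self-loop assumption in the hypothesis is what makes Corollary 4.1 apply, so that each $C_k$ is contained in its own expansion $A_k$ and the diagonal entries of $\mathbf{S}^T\mathbf{A}\mathbf{S}$ cause no mismatch with loops in the nerve (which are typically discarded, so one should either restrict to the simple-graph interpretation on both sides or keep loops on both sides consistently). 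Once this convention is fixed, the equivalences above close the argument.
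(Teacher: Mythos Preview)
Your proposal is correct and follows essentially the same approach as the paper: both combine the lemma characterising the nonzero entries of $\mathbf{S}^T\mathbf{A}\mathbf{S}$ with the lemma characterising nonempty intersections in $\gA$ to conclude that the two edge sets coincide under the obvious bijection of cluster indices. Your write-up is in fact slightly more explicit than the paper's, which simply cites the two lemmas and asserts isomorphism without naming the map or discussing the self-loop/diagonal edge cases.
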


\begin{proof}
Based on Lemma \ref{lemma:nerve}, $sk_1(\gN(\gA))$ connects two soft clusters in $G$ defined by $\mS$ if and only if there is a common edge between them. By Lemma \ref{lemma:soft_clust}, soft cluster assignment methods connect the soft clusters identically through the adjacency matrix $\mathbf{A'} = \mathbf{S^T}\mathbf{A}\mathbf{S}$. Therefore, the unweighted graphs determined by $sk_1(\gN(\gA))$ and $\mathbf{A'}$ are isomorphic. 
\end{proof}

Note that our statement refers strictly to unweighted edges. However, an arbitrary weight function $w: V \to \sR$ can easily be attached to the graph obtained though $sk_1(\gN(\gA))$.

\section*{Appendix B: Structural Deep Graph Mapper}

The edges between the nodes in a DGM visualisation denote semantic similarities discovered by the lens. However, even though semantically-related nodes are often neighbours in many real-world graphs, this is not always the case. For example, GNNs have been shown to ignore this assumption, often relying entirely on the graph features~\citep{Luzhnica2019OnGC}. 

\begin{figure}[!ht]
    \centering
    \includegraphics[width=0.8\columnwidth]{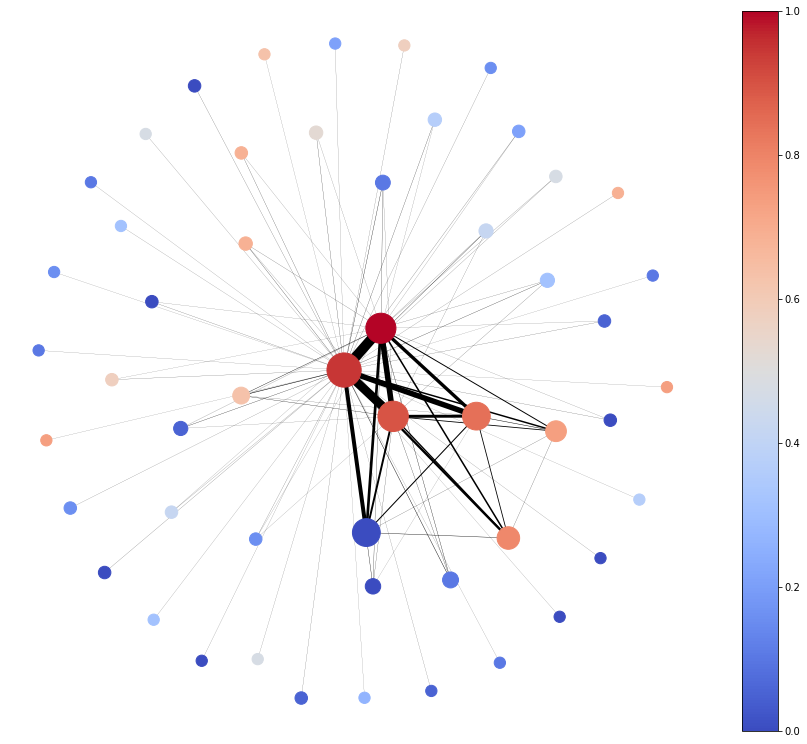}
    \caption{SDGM visualisation of the Spammer dataset. The thickness of the edges is now proportional to the number of edges between clusters. We used a filtration value of $\eps = 0.01$ and $g = 0$ for the overlap. This visualisation also illustrates that the spammers are densely connected to the other nodes in the graph, while non-spammers form smaller connected groups. However, unlike the DGM visualisation, this graph also shows the (structural) edges between the spammers and the non-spammers.}
    \label{fig:intro_sdgm}
\end{figure}

\begin{figure*}[!ht]
    \centering
    \begin{subfigure}[t]{0.32\textwidth}
         \centering
         \includegraphics[width=\textwidth]{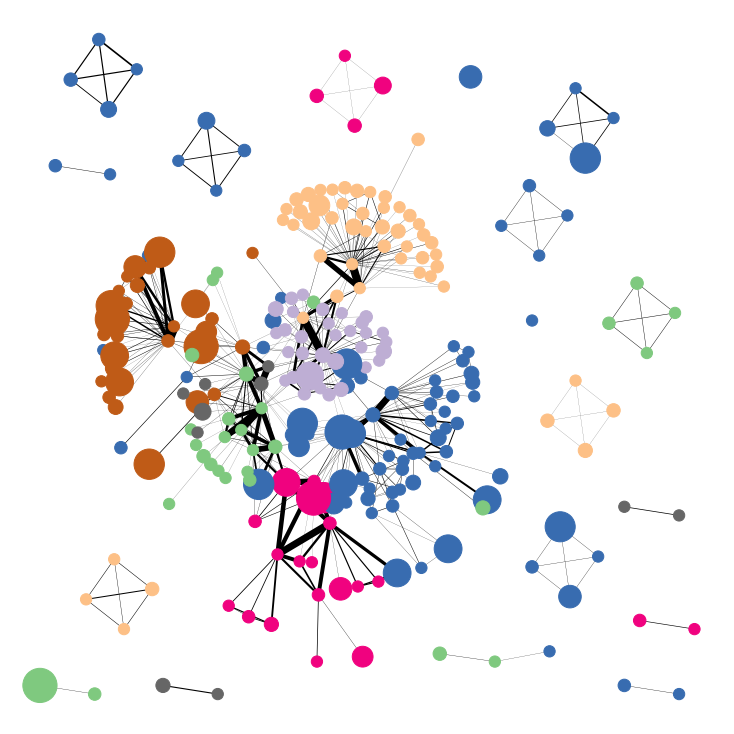}
         \caption{$\eps = 0.01$}
    \end{subfigure}
    ~
    \begin{subfigure}[t]{0.32\textwidth}
         \centering
         \includegraphics[width=\textwidth]{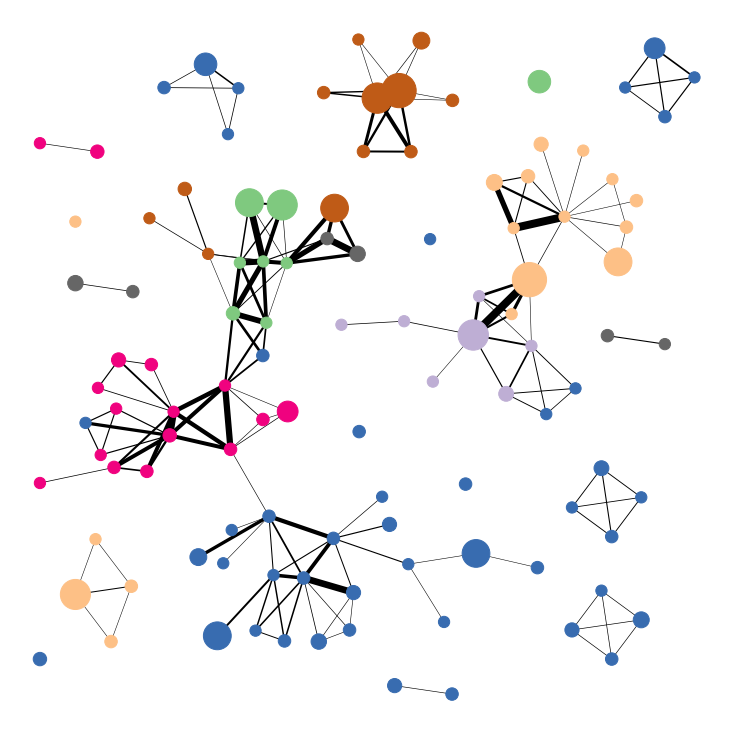}
         \caption{$\eps = 0.05$}
    \end{subfigure}
    ~
    \begin{subfigure}[t]{0.33\textwidth}
         \centering
         \includegraphics[width=\textwidth]{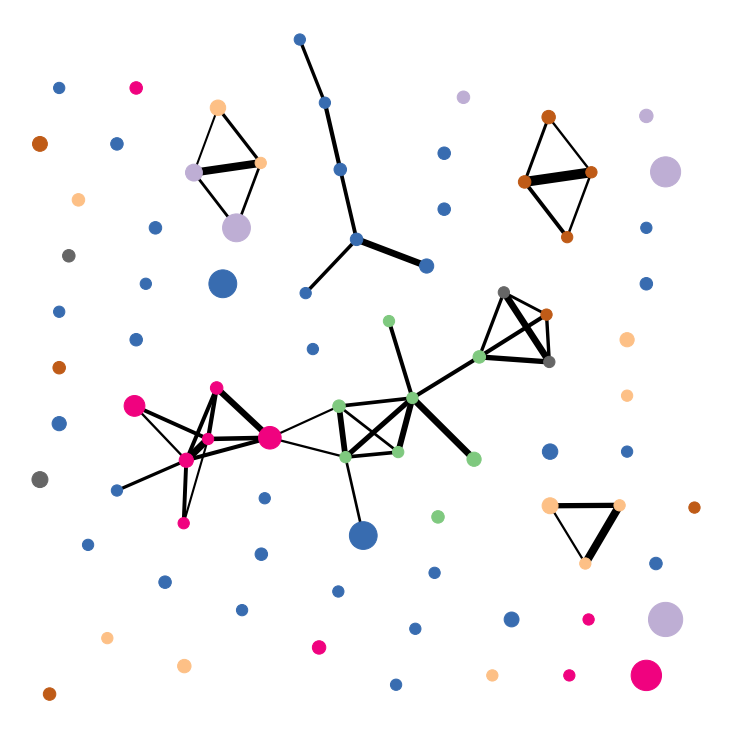}
         \caption{$\eps = 0.10$}
    \end{subfigure}
    ~
    \caption{SDGM visualisation on Cora using DGI lens and ground-truth labels, with varying values of $\eps$. Lower values of $\eps$ increase the connectivity of the graph.}
    \label{fig:sdgm_multi}
\end{figure*}

Therefore, it is sometimes desirable that the connectivity of the graph is explicitly accounted for in the visualisations, being involved in more than simply computing the refined pull back cover. Motivated by the proof from Appendix A, we also propose~\textbf{Structural DGM (SDGM)}, a version of DGM that connects the elements of the refined pull back cover based on the number of edges between the component nodes from the original graph.

SDGM uses the refined pull back cover induced by the 1-hop expansion in the expanded graph (see Appendix A) to compute the nerve. However, a downside of this approach is that the output graph may often be too dense to visualise. Therefore, we use a weighting function $w: E \to [0, 1]$ to weight the edges of the resulting graph and a filtration value $\eps \in [0, 1]$. We then filter out all the edges $e$ with $w(e) < \eps$, where $w(e)$ is determined by the normalised weighted adjacency matrix $\mathbf{S^T}\mathbf{A}\mathbf{S}$ denoting the (soft) number of edges between two clusters. Figure~\ref{fig:intro_sdgm} includes an SDGM visualisation for the spammer graph.

The overlap parameter $g$ effectively sets a trade-off between the number of structural and semantic connections in the SDGM graph. For $g = 0$, only structural connections exist. At the same time, the filtration constant $\eps$ is an additional parameter that adjusts the resolution of the visualisations. Higher values of $\eps$ result in sparser graphs, while lower values increase the connectivity between the nodes. We illustrate the effects of varying $\eps$ in Figure~\ref{fig:sdgm_multi}.

\section*{Appendix C: Model Architecture and Hyperparameters}

The optimised models described in the Experiments section have the following configurations:
\begin{itemize}
    \item DGM---learning rate $5e^{-4}$, hidden sizes $\{128, 128\}$ and:
    \begin{itemize}
        \item D\&D and Collab: cover sizes $\{20, 5\}$, interval overlap $10\%$, batch size $32$;
        \item Proteins: cover sizes $\{8, 2\}$, interval overlap $25\%$, batch size $128$;
        \item Reddit-Binary: cover sizes $\{20, 5\}$, interval overlap $25\%$, batch size $32$;
    \end{itemize}
    \item Top-$k$---specific dataset configurations, as provided in the official GitHub repository (\texttt{run\_GUNet.sh});
    \item minCUT---learning rate $1e^{-3}$, same architecture as reported by the authors in the original work~\citep{bianchi2019mincut};
    \item DiffPool---learning rate $1e^{-3}$, hidden size $32$, two pooling steps, pooling ratio $r=0.1$, global average mean readout layer, with the exception of Collab and Reddit-Binary, where the hidden size was $128$.
\end{itemize}

We additionally performed a hyperparameter search for DiffPool on hidden sizes ${32, 64, 128}$ and for DGM, over the following sets of possible values:
\begin{itemize}
    \item all datasets: cover sizes $\{[40, 10], [20, 5]\}$, interval overlap $\{10\%, 25\%\}$;
    \item D\&D: learning rate $\{5e^{-4}, 1e^{-3}\}$;
    \item Proteins: learning rate $\{2e^{-4}, 5e^{-4}, 1e^{-3}\}$, cover sizes $\{[24, 6], [16, 4], [12, 3], [8, 2]\}$, hidden sizes $\{64, 128\}$.
\end{itemize}

\end{document}